\documentclass{article}

% if you need to pass options to natbib, use, e.g.:
%     \PassOptionsToPackage{numbers, compress}{natbib}
% before loading neurips_2025

% ready for submission
%\usepackage{neurips_2025}

% to compile a preprint version, e.g., for submission to arXiv, add add the
% [preprint] option:
\usepackage[preprint]{neurips_2025}

% to compile a camera-ready version, add the [final] option, e.g.:
%     \usepackage[final]{neurips_2025}

% to avoid loading the natbib package, add option nonatbib:
%    \usepackage[nonatbib]{neurips_2025}

\usepackage[utf8]{inputenc} % allow utf-8 input
\usepackage[T1]{fontenc}    % use 8-bit T1 fonts
\usepackage{hyperref}       % hyperlinks
\usepackage{url}            % simple URL typesetting
\usepackage{booktabs}       % professional-quality tables
\usepackage{amsfonts}       % blackboard math symbols
\usepackage{nicefrac}       % compact symbols for 1/2, etc.
\usepackage{microtype}      % microtypography
\usepackage{xcolor}         % colors
\usepackage{bm}
\usepackage{amsmath}
\usepackage{amssymb}
\usepackage{amsthm}
\usepackage{thmtools}
\usepackage{thm-restate}
\usepackage{cleveref}
\usepackage{graphicx}
\usepackage{enumitem}
\usepackage{wrapfig}

\theoremstyle{plain}
\newtheorem{theorem}{Theorem}[section]

\newtheorem{lemma}[theorem]{Lemma}

\newtheorem{corollary}[theorem]{Corollary}

\DeclareMathOperator*{\argmax}{argmax}

\title{Data Shifts Hurt CoT: A Theoretical Study}

% The \author macro works with any number of authors. There are two commands
% used to separate the names and addresses of multiple authors: \And and \AND.
%
% Using \And between authors leaves it to LaTeX to determine where to break the
% lines. Using \AND forces a line break at that point. So, if LaTeX puts 3 of 4
% authors names on the first line, and the last on the second line, try using
% \AND instead of \And before the third author name.

%\author{%
  %Lang Yin\\
  %Department of Computer Science\\
  %University of Illinois Urbana-Champaign\\
  %\texttt{langyin2@illinois.edu} \\
 % }
  \author{%
Lang Yin \quad Debangshu Banerjee \quad Gagandeep Singh\\
Department of Computer Science\\
University of Illinois Urbana-Champaign\\
\texttt{\{langyin2, db21, ggnds\}@illinois.edu}\\
}
  % examples of more authors
  % \And
  % Coauthor \\
  % Affiliation \\
  % Address \\
  % \texttt{email} \\
  % \AND
  % Coauthor \\
  % Affiliation \\
  % Address \\
  % \texttt{email} \\
  % \And
  % Coauthor \\
  % Affiliation \\
  % Address \\
  % \texttt{email} \\
  % \And
  % Coauthor \\
  % Affiliation \\
  % Address \\
  % \texttt{email} \\
%}

\begin{document}

\maketitle

\begin{abstract}
    Chain of Thought (CoT) has been applied to various large language models (LLMs) and proven to be effective in improving the quality of outputs. In recent studies, transformers are proven to have absolute upper bounds in terms of expressive power, and consequently, they cannot solve many computationally difficult problems. However, empowered by CoT, transformers are proven to be able to solve some difficult problems effectively, such as the $k$-parity problem. Nevertheless, those works rely on two imperative assumptions: (1) identical training and testing distribution, and (2) corruption-free training data with correct reasoning steps. However, in the real world, these assumptions do not always hold. Although the risks of data shifts have caught attention, our work is the first to rigorously study the exact harm caused by such shifts to the best of our knowledge. Focusing on the $k$-parity problem, in this work we investigate the joint impact of two types of data shifts: the distribution shifts and data poisoning, on the quality of trained models obtained by a well-established CoT decomposition. In addition to revealing a surprising phenomenon that CoT leads to worse performance on learning parity than directly generating the prediction, our technical results also give a rigorous and comprehensive explanation of the mechanistic reasons of such impact.
    %Focusing on the $k$-parity problem, we investigate the the following problem: How do distribution shift and wrong CoT steps impact the quality of output? Further, how severe is the deterioration quantitatively? Second, for difficult problems which we must rely on additional reasoning, how much poisoning can a well-designed transformer tolerate while still maintaining satisfactory performance? The answers to those problems answer a more critical question: When does chain of thought degrade the performance instead of improving it?
\end{abstract}

\section{Introduction}

Large language models (LLMs) based on the transformer architecture has achieved tremendous success in the area of artificial intelligence~\citep{attentionisallyouneed}. However, without intermediate guidance or supervision, they do not perform well especially on complex reasoning problems which require rigorous logical steps~\cite{sakarvadia-etal-2023-memory}. Chain of Thought (CoT) has empowered LLMs to a large extent~\citep{CoT2022Wei}, making them much more capable at multi-step reasoning~\citep{nye2021, CoT2022Wei, star2022, lightman2024}, and more effective against hallucinations~\citep{dhuliawala-etal-2024-hallucination}. From a theoretical point of view, CoT has recently been proven to fundamentally improve the power of transformers from a complexity-theoretic perspective~\citep{merrillexpressive, merrill-sabharwal-2023-parallelism, merrill-etal-2022-saturated, liCoT2024}. Several works have applied the CoT mechanism to solve concrete mathematical problems that are hard for primitive models, such as function classes via in-context learning~\citep{li2023incontextfunction, bhattamishra2024} and the $k$-parity problem~\citep{kim2025parity}.

However, all previous works only took care of the case with perfect training data for CoT reasoning. In practice, there can be distribution shifts between training and testing data, as well as some of the CoT reasoning steps used for training can be incorrect. It is an open problem to assess the performance of CoT under such shifts. Even for mathematical problems with a clear structure, this question remains open without a comprehensive answer.

In this paper, we conduct a comprehensive and theoretical study of the impact of training data on CoT for the task of learning parity functions. In one unified \Cref{main}, we characterize a necessary and sufficient condition for CoT success on this task under the joint impact of distribution shifts and data poisoning. The condition gives a decisive criterion on the success of this training method by assessing the parameters of the two types of data shifts concurrently. To the best of our knowledge, this paper is the first work to study this problem from a theoretical perspective.

\paragraph{Motivations.} In this paper, we focus on answering the following major question:
\begin{center}
    Is CoT still effective under data shifts?
\end{center}
To tackle this question, we choose the ``\textbf{generalized}'' $k$-parity problem as a platform, where the term ``\textbf{generalized}'' refers to a broad class of input generating distributions. They will be rigorously defined in Section 2, and they roughly divide the classes of $k$-parity problems into an ``\textit{easy}'' class and ``\textit{hard}'' class; the level of ``\textit{easiness}'' (or ``\textit{hardness}'') can be precisely, quantitatively characterized by the parameter of input generating distributions. At a high level, a non-uniform distribution makes the problem easier by ``leaking'' information on the locations of target bits. Another type of shift is data poisoning in the training data of CoT steps. In this work, we investigate the joint impact of both training distributions and data poisoning in those steps on the performance of CoT for the generalized k-parity problem. The answers revealing this three-way relationship is our main contribution in this work. The major question can further be divided into the following more specific ones, and they are the key topics being investigated in our work:
\begin{enumerate}[noitemsep, nolistsep, leftmargin=*]
    \item Does more information leak help the algorithm to identify the correct positions of relevant bits?
    \item How severe is the impact of data poisoning? We divide this question into two more specific ones:
    \begin{enumerate}
        \item What is the threshold on the level of poisoning that learning can tolerate? 
        \item Is there a specific pattern of corruption that harms learning?
        %\item Can we explain the mechanism of such harm?
    \end{enumerate}
    \item How do both types of shifts affect the training if they concurrently exist?
    \item Can we explain the mechanism of such effect?
\end{enumerate}
All five questions will be answered in Section 4.2.

%The significance of data quality raises a few additional questions: (1) Is there a quantitative bottom line of data quality for successful training; (2) What degrades data quality; and (3) Why does a low data quality cause difficulties for training? The three statements are simple forms, and their more formal version will be stated in the next section.

\paragraph{Choice of the $k$-parity problem.} The $k$-parity problem aims to guess the sign of the product of $k$ selected bits from $\{ -1, 1 \}$ among a large number of bits, and consequently identify the relevant positions involved in the product. We select it for two main reasons.
\begin{enumerate}[leftmargin=*]
    \item The major theme of our paper is about the impact of distribution shifts and data poisoning. Any shift on binary inputs for the $k$-parity problem can be easily, objectively quantified. Similarly, at every step and every position, an entry has only one correct value given a number of bits, so anything other than the true value is poisoning.
    \item The quality of a parity predictor can be objectively assessed as the correctness of the output has an absolute criterion.
\end{enumerate}

\paragraph{Contributions.} Our contributions are summarized as follows.
\begin{enumerate}[noitemsep, nolistsep, leftmargin=*]
    \item In \Cref{easy-input-output}, first show that the imbalanced $k$-parity problem is ``easy", meaning it indeed can be efficiently solved by a one-layer transformer in one gradient update without CoT, and the optimization landscape is benign. %Consequently, CoT provides limited additional benefits on this problem, and if poisoning is stronger than a threshold, CoT can harm the performance even for this simpler problem.
    \item Next, we reveal the joint impact of the distribution shift and data poisoning on the performance of the predictor trained by the successful CoT decomposition of the $k$-parity problem introduced in~\citep{kim2025parity}. This three-way relationship is compressed into one statement in \Cref{main}, and characterizes a necessary and sufficient condition on the amount of distribution shift and data poisoning to ensure successful training. This result has several implications.
    \begin{itemize}[noitemsep, nolistsep, leftmargin=*]
        \item The tolerance of corrupted CoT training samples is only $O(1/k)$, making the learning vulnerable against data poisoning.
        \item Surprisingly, distribution shift always hurts: A higher degree of shift always leads to worse training performance. In our setting, recall that non-uniform distributions leak information on the location of target bits, so intuitively it should help the predictor to learn. However, our result suggests the opposite, and \Cref{rho=0-failure} eliminates the possibility of successful learning when the locations are exposed to the maximum extent.
         %\item In our result, we provide a comprehensive analysis on the quantity of error, and reveal the particular structure of poisoning that can work adversely for this training paradigm. Nevertheless, the training algorithm is tolerant to an extent, and our result concludes an equivalent condition of successful training despite poisoning: As long as the input dimension is large enough and the poisoning exist in a particular form and accumulate below a threshold, the output predictor accurately identifies the relevant elements for the target parity function.
         %\item The impact of training distribution is more surprising. If the distribution is ``imbalanced'' as described above, it simplifies the $k$-parity problem by leaking information about the positions of relevant bits. Intuitively, if the training is empowered with CoT, the algorithm shall more effectively learn those positions and output a more accurate predictor. It is reasonable to assume such a predictor shall be robust against the uniform testing distribution because all it needs is to identify the positions of relevant bits. However, our result shows that, the more information the training distribution leaks, the harder the training is. At the extreme case, if the imbalanced distribution leaks as much information as it can, this CoT decomposition would not work at all.
    \end{itemize}
    %\item For balanced $k$-parity, a provably more difficult problem, the harm of high poisoning preserves as expected. Nevertheless, we show that if the poisoning is below a threshold, then CoT still functions well as a necessary component to solve the problem. \lang{Citations of theorems will be added after finishing the respective sections.}
\end{enumerate}

\section{Related works}

\paragraph{Empowerment of CoT.} Starting from 2022, a line of work investigates the expressive powers of transformers from a complexity theoretic perspective. The most recent and comprehensive works include~\citep{merrillexpressive} and~\citep{liCoT2024}. The first paper provides a comprehensive complexity-theoretic relationship between CoT steps and computational power. Almost concurrently, the second paper proves tighter upper bounds for constant-precision transformers. Together, these two works confirm that, with sufficient (polynomial) CoT steps, transformers break their original upper bound in computation and can compute any problem in \texttt{P/poly}. Beyond theoretical soundness of those works, more concrete implementations on CoT are also being studied, including concrete training paradigms~\citep{li2024nonlinear} and interactions with inference-time search and reinforcement learning fine-tuning~\citep{kim2025Metastable}.

\paragraph{Empirical discoveries of limitations on CoT.} Despite both theoretical and empirical successes, recent empirical works also revealed that sometimes CoT may worsen the performance~\citep{shaikh2023, kambhampati24a}. The CoT mechanism has been shown to improve performance mainly on mathematical and logical tasks, but less so for other tasks~\citep{sprague2025tocotornot}. For tasks where thinking can make human performance worse, the harm caused by overthinking also holds for models with CoT~\citep{liu2024mindyourstep}. It was also found recently that transformers can still solve problems with meaningless filler CoT tokens~\citep{pfau2024COLM}.

\paragraph{Parity and LLMs.} It has been shown that if, with a positive probability, the relevant bits are uniformly $1$ or $-1$, then this ``\textbf{imbalanced}'' $k$-parity problem can be solved by an one-layer neural network~\cite{daniely2020}. However, if all inputs are uniformly generated, then this uniform $k$-parity problem has been proven not to be solvable by any input-output learning algorithm based on gradient updates~\citep{shalev2017, shamir2018}. Recently, thanks to developments of CoT, several works have made significant progresses on solving uniform $k$-parity with task decompositions as CoT steps. Success has been achieved for recurrent neural networks in~\cite{wiessub2023}, and they designed a task decomposition of $k$-parity into $k-1$ structured steps. Afterwards, \cite{kim2025parity} extended their results to autoregressive transformers. 

\section{Problem setup}

\paragraph{Notation.} We write $[n] := \{1, \cdots, n\}$ for any integer $n$. The multi-linear inner product of vectors $\bm{z}_1, \ldots, \bm{z}_r \in \mathbb{R}^n$ for any $r \in \mathbb{N}$ is denoted as $\langle \bm{z}_1, \cdots, \bm{z}_r \rangle := \sum_{i=1}^{n} z_{1,i} \cdots z_{r, i}$. In particular, $\langle \bm{z} \rangle = \bm{z}^{\top} \bm{1}_n$ and $\langle \bm{z}_1, \bm{z}_2 \rangle = \bm{z}_1^{\top} \bm{z}_2$. The transformer will be denoted by a function $\text{TF}(\cdot)$ Unless specified, each binary vector $\bm{x}$ represents the ground truth, and $\hat{\bm{x}}$ is the generated vector by the transformer. The definition of data poisoning or data corruption will be formally presented later, but if $\bm{x}$ is injected with poisoning, then it is denoted by $\tilde{\bm{x}}$.

\subsection{The parity problem}

Let $d \geq k \geq 2$ be integers, and $P$ be an arbitrary subset of $[d]$ with $k$ elements. In this paper, we study the $k$-parity problem, where the output of the target parity function is $y = \prod_{j \in P} x_j$, so the function value called \textit{parity}, entirely depends on the coordinates at the locations determined by $P$. Given $n$ samples $(\bm{x}^i, y^i)_{i \in [n]}$, our goal is to predict the parity of any test input from $\{ \pm 1 \}^d$. We assume $k = \Theta(d)$.

It is known that, if all inputs $\bm{x}^i$ of dimension $d$ are uniformly generated, then this ``\textbf{uniform}" $k$-parity problem is fundamentally difficult and cannot be solved in polynomial time by any finite-precision gradient-based algorithms~\citep{wiessub2023}. Recently, it was proven that the uniform $k$-parity problem can be solved by transformers with $\log k$ reasoning steps~\citep{kim2025parity}.

On the other hand, for a particular kind of imbalanced distribution on the input bits for training, the $k$-parity problem is proven to be solvable by neural networks~\citep{daniely2020}. The ``imbalanced" distribution is defined as the following: For any number $\rho \in [0,1]$ and a subset $P$ of $[d]$, the distribution $\mathcal{D}^P_{\rho}$ is a distribution on the input bits such that
\begin{itemize}[leftmargin=*]
    \item With probability $\rho$, all $d$ bits are uniformly generated.
    \item With probability $1-\rho$, all bits in $[d] \setminus P$ are uniformly generated, but the bits in $P$ are all $1$ with probability $1/2$, and all $-1$ also with probability $1/2$.
\end{itemize}
If $\rho = 1$, $D^P_{\rho} = D^P_1$ reduces to the uniform distribution. Intuitively, any distribution $D^P_{\rho}$ with $\rho < 1$ leaks information for the relevant bits and consequently makes the parity problem easier. As we will show later in Section 4.1, this imbalanced $k$-parity can also be solved by a one-layer transformer. If the value of $\rho$ is not specified, we categorize such a problem a ``\textbf{generalized}'' $k$-parity problem. %, and surprisingly, CoT reasoning may work adversely against this problem.

\subsection{Chain of Thought (CoT)}

%In this section, we briefly present the intuition of chain of thought (CoT) at a high level and then discuss more concrete settings and the motivation to study robustness for our task.

Designed and first implemented in 2022~\citep{CoT2022Wei}, the \textbf{chain of thought (CoT)} approach has been a powerful technique to improve large language model's performance for reasoning tasks. At high level, CoT inquiries ask the language model to generate intermediate steps instead of outputting a final answer directly. The intermediate steps are obtained by applying the transformer repetitively using earlier intermediate tokens, and they can be regarded as a ``reasoning chain''.% Thanks to the fact that the $k$-parity problem is a well-stated mathematical problem, the intermediate steps in our case have more concrete forms.

\paragraph{Task decomposition as CoT.} As in~\citep{wiessub2023} and~\citep{kim2025parity}, we assume $k=2^v$ for simplicity, where $v \in \mathbb{N}$. The chain of thought protocol decomposes the $k$-parity problem as a sequence of $2$-parity problems. Visually, this is expressed as a complete binary tree of height $v$ and $2k-1$ nodes. The lowest level on this tree contains $k$ nodes from $P$, and the remaining $d-k$ irrelevant bits are isolated vertices and not part of the tree. All remaining nodes represent reasoning steps, and are labeled as $x_{d+1}, \ldots, x_{d+k-1}$. The next level above contains $k/2$ nodes, then $k/4$, and so on until at level $v$, the unique node $x_{d+k-1}$ is the final prediction of the parity value. For each $d < m < d+k-1$, $x_m$ must have exactly two children and they are denoted by $c_1[m]$ and $c_2[m]$. At the same time, it must have exactly one parent and it is denoted by $p[m]$. The height of the tree is denoted as $h[m]$, the length of the longest path in the graph. All $d$ nodes corresponding to $d$ inputs are located at level zero; the height of any other node is difference between $h[m]$ and the number of edges from itself to the root.

\paragraph{Feedforward layer.} The feedforward layer carries a fixed link function $\phi: [-1,1] \rightarrow [-1,1]$, applied element-wise. To exploit the decomposition of our task into 2-parities, we choose $\phi$ such that $\phi(0) = -1$, $\phi( \pm 1) = 1$ so that sums are converted into parities, i.e. $\phi(\frac{a+b}{2}) = ab$ for $a, b \in \{ \pm 1 \}$. Moreover, we require symmetry of $\phi$, and that $\phi'(0) = 0$. Specifically, we choose the following function.
\begin{equation}
    \phi(x) = \begin{cases}
    d^3 x^2 + d^{-3} -1, \quad x \in (-d^{-3}, d^{-3});\\
    2|x| - 1, \quad \quad \quad \quad \text{otherwise}.
    \end{cases}
\end{equation}
The choice of this function ensures that $\phi$ is differentiable everywhere on $[-1,1]$. There is a discrepancy on $\phi(0) = d^{-3}-1$ instead of $-1$, but the gap will be bounded as perturbations and approach to zero as $d$ becomes large.

\begin{figure}
    \centering
    \includegraphics[scale=0.7]{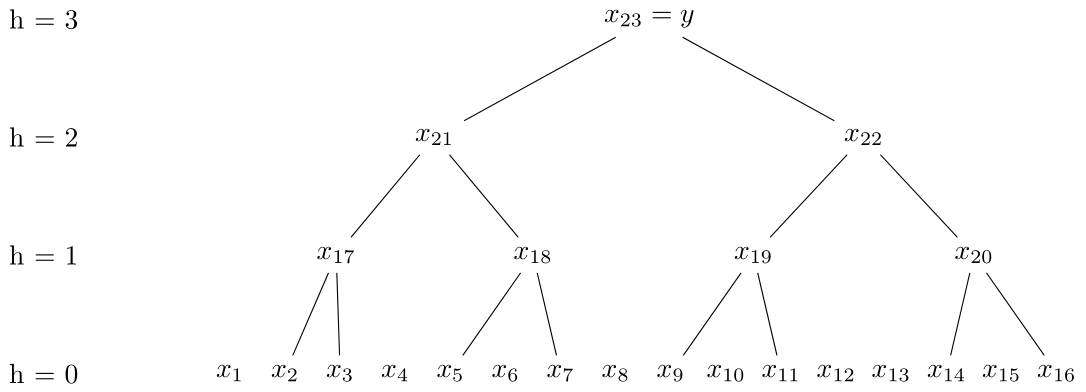}
    \caption{A hierarchical decomposition of an 8-parity problem for $d=16$. Over here, $x_{17} = x_2 x_3$, so $c_1[17] = 2$, $c_2[17] = 3$, $p[17]=21$, and $h[17]=1$.}
    \label{fig:cot-tree}
\end{figure}

\paragraph{Specific CoT process.} For our case on $k$-parity, all input bits $\bm{x}_1, \ldots, \bm{x}_d$ are fixed, and the positions of later steps before generation are null. The first intermediate token, $\hat{\bm{x}}_{d+1}$, is generated next and staying the same value through the entire remaining process. Similarly, the next token is generated by $\hat{\bm{x}}_{d+2} = \text{TF}^{(2)} (\bm{x}_1, \cdots, \bm{x}_d, \hat{\bm{x}}_{d+1}; \bf{W})$, where $\bf{W}$ is the transformer weights. Finally, the final prediction is computed by repeating the computation for $k-1$ times:
\begin{equation}
    \bm{y} = \text{TF}^{(k-1)} (\bm{x}_1, \ldots, \bm{x}_d, \hat{\bm{x}}_{d+1}, \ldots, \hat{\bm{x}}_{d+k-2}; \bf{W})
\end{equation}

\paragraph{Teacher forcing.} For CoT implementations, we utilize \textit{teacher forcing} in our training process. Teacher forcing is a form of process supervision, where in addition to the final prediction, ground truth labels for CoT steps are provided during training. Consequently, the accuracy of each CoT step can be measured. Given $n$ samples and model weights $\mathbf{W}$, the total loss takes every position $d+1 \leq m \leq d+k-1$ into account and is defined as
\begin{equation}
    L(\mathbf{W}) = \frac{1}{2n} \sum_{m=d+1}^{d+k-1} \lvert \lvert \hat{\bm{x}}_m - \bm{x}_m \rvert \rvert^2 = \frac{1}{2n} \sum_{m=d+1}^{d+k-1} \lvert \lvert \phi(\hat{\bm{z}}_m) - \bm{x}_m \rvert \rvert^2, \quad \bm{z}_m = \sum_{j=1}^{m-1} \sigma_j (\bm{w}_m) \bm{x}_j,
\end{equation}
where $\sigma_j(\bm{w}_j)$ are softmax attention scores.

\paragraph{CoT data corruption.} Intuitively, if all CoT steps are correct for training, then indeed the final predictor will accurately reflect the true target function thanks to correct decomposition. However, correctness of CoT steps relies on correct ``ground truth'' labels during the training steps. If a high amount of such tokens are false, because of either oversight or malicious attacks, then naturally, one may infer that the quality of those intermediate steps may deteriorate. In our case, the ground truth labels are either $1$ or $-1$, and corruption refers to flipping the signs of some inputs.

%\lang{The next subsection should be about the training, teacher forcing, and what ``poisoning'' means in our context.}

\section{Theoretical results}

\subsection{Imbalance is easier than uniformity for transformers without CoT}

In this section, we assume $\rho < 1$ so information on relevant bits in $P$ are leaked. We assume $\rho$ is never too small nor too large, so $\rho = \Theta(1)$. The goal of this section is to show that the imbalanced problem is indeed solvable by a simple, one-layer transformer with a softmax attention layer. \Cref{easy-input-output} is the specific statement of this result. The proof will be presented in the appendix.

\begin{theorem} \label{easy-input-output}
    Given $n$ samples where $n = \Omega (d^{2 + \epsilon})$, with probability at least $1 - \exp(d^{\epsilon/2})$, a learning rate $\eta = \Theta (d^{\epsilon})$ and all-zero initializations, the predictor $\hat{y}$ after one-step update satisfy $| \hat{y} - y | \leq O(d^{-1+\epsilon})$ for any given input $\bm{x} \in \{ \pm 1 \}^d$ and $y = \prod_{r \in P} x_r$.
\end{theorem}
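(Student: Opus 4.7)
The plan is to trace a single gradient step from the all-zero initialization and argue that it concentrates the softmax attention on the relevant coordinate set $P$ sharply enough that $\phi(\hat z)$ recovers the parity with the claimed $O(d^{-1+\epsilon})$ slack. At $\bm{w} = \bm{0}$ the softmax is uniform, so the pre-activation reduces to the coordinate mean $\hat z(\bm{x}) = d^{-1}\sum_j x_j$. Under $\mathcal{D}^P_\rho$ a sample falls with probability $1-\rho$ into an \emph{aligned} regime where the $k$ bits of $P$ are jointly $\pm 1$, forcing $y = 1$ (since $k$ is even) and $\hat z = \pm k/d + O(d^{-1/2}) = \Theta(1)$ inside the linear branch of $\phi$; and with probability $\rho$ into a \emph{uniform} regime where $\hat z = O(d^{-1/2})$ and $y$ is unbiased. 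This dichotomy supplies the learning signal.

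First I would compute the population gradient in closed form. Using $\partial_{w_j}\hat z = d^{-1}(x_j - \hat z)$ at the uniform softmax point and the chain rule through $\phi$,
\[
\partial_{w_j} L(\bm{0}) \;=\; d^{-1}\,\mathbb{E}\bigl[(\phi(\hat z) - y)\,\phi'(\hat z)\,(x_j - \hat z)\bigr].
\]
In the aligned regime $\phi'(\hat z) = \pm 2$ and $\phi(\hat z) - y = 2k/d - 2 = -\Theta(1)$, while $x_j - \hat z$ equals $\pm(1 - k/d)$ for $j \in P$ but has conditional mean $\mp k/d$ for $j \notin P$ (opposite sign, smaller magnitude). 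Averaging over the two aligned signs leaves a coherent $\Theta((1-\rho)/d)$ signal that pushes the $P$- and $P^c$-coordinates in opposite directions, while the uniform regime contributes only $O(d^{-2})$ noise through cancellations in the third- and fourth-moment terms of $\hat z$. Each per-sample summand is $O(1)$-bounded, so Hoeffding combined with a union bound over $d$ coordinates yields empirical-gradient deviation $O(\sqrt{(\log d)/n})$ with failure probability $\exp(-\Omega(d^{\epsilon/2}))$ whenever $n = \Omega(d^{2+\epsilon})$, and the signal dominates the noise.

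Finally I would propagate the update through the softmax and the post-activation $\phi$. With $\eta = \Theta(d^\epsilon)$, the updated weights separate $P$ from $P^c$ by a margin large enough that the new softmax places essentially all attention on $P$, so $\hat z$ agrees with $k^{-1}\sum_{j \in P} x_j$ up to an $O(d^{-1+\epsilon})$ perturbation contributed by the residual off-$P$ tail; feeding this into $\phi$ then yields the target $|\hat y - y| = O(d^{-1+\epsilon})$. The hardest step, I expect, is the joint calibration of $\eta$, $n$, and the aligned-regime signal: isolating the $\Theta((1-\rho)/d)$ contribution requires careful accounting for the self-dependence of $\hat z$ on $x_j$, together with showing that the uniform-regime moment cancellations and the softmax propagation combine cleanly enough that the final post-step error matches the target $O(d^{-1+\epsilon})$ rate rather than a looser polynomial bound.
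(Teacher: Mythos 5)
Your high-level plan matches the paper in spirit: start at the uniform softmax, exploit the $(1-\rho)$-mass of aligned samples to create a signed gradient gap between $j\in P$ and $j\notin P$, use Hoeffding plus a union bound over $O(d^4)$ moment terms (the paper does this via its Lemma A.1), take one step, and argue the attention collapses onto $P$. But there is a genuine gap in the choice of feedforward function that breaks your final step.

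You reason as if $\phi$ is the piecewise-quadratic/linear map of Section~3.2 (you invoke $\phi'(\hat z)=\pm 2$ on the linear branch and $\phi(\hat z)-y = 2k/d - 2$, which only hold for that $\phi$). That map satisfies $\phi\bigl((a+b)/2\bigr)=ab$ for $a,b\in\{\pm1\}$, i.e.\ it implements a \emph{2-parity}, and it is the right choice inside the CoT recursion. It does \emph{not} implement a $k$-parity from a length-$k$ average. Your closing step — ``$\hat z$ agrees with $k^{-1}\sum_{j\in P}x_j$ up to $O(d^{-1+\epsilon})$, and feeding this into $\phi$ yields $|\hat y - y|=O(d^{-1+\epsilon})$'' — therefore fails: if exactly half the $P$-bits are $+1$ and half $-1$, then $k^{-1}\sum_{j\in P}x_j = 0$ and the piecewise $\phi$ returns $\phi(0)=d^{-3}-1\approx -1$, whereas the true parity is $+1$. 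The theorem is a uniform guarantee over all $\bm x\in\{\pm1\}^d$, so this is not a measure-zero issue. The paper addresses this head-on by replacing $\phi$ for the flat (no-CoT) model with $\phi(x)=\cos(0.5k\pi x)$, chosen so that $\phi\bigl(k^{-1}\sum_{j\in P}x_j\bigr) = \prod_{j\in P}x_j$ exactly; it then Taylor-expands $\phi'(x)\approx -\tfrac{\pi^2 k^2}{4}x$ near $0$ and re-derives every moment term. Because $\phi'$ near zero scales like $k^2 x$ rather than being a constant $\pm 2$, the signal and noise magnitudes in the gradient are also on a different polynomial scale, so your calibration of $\eta$ would need to be redone once the correct $\phi$ is in place. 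To repair the proof you should (i) swap in a genuinely $k$-parity-computing $\phi$ such as the paper's cosine, (ii) redo the per-regime gradient calculation with $\phi'(\hat z)\sim k^2\hat z$ and the resulting fourth-moment terms in $\hat z$, and (iii) re-tune $\eta$ accordingly — the rest of your concentration and softmax-collapse argument then goes through much as you sketched.
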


\begin{proof}[Proof sketch]
    %The proof of this theorem utilizes the variant of the tree described in the previous section. As \Cref{fig:flat-tree} shows, this tree has only height one, where the bottom layer contains $k$ leaves representing all relevant bits, and they share a same parent $x_{d+1}$, which is the final predicted value.
    
    The proof involves explicitly computing the gradient with respect to each weight $w_{j,m}$, and utilizing the large differences on the gradients between relevant and irrelevant bits (whether $j \in P$ or not). Because the softmax scores are identical at initialization, we can compute the interaction terms among the tokens $\bm{x}_1, \ldots, \bm{x}_d, \hat{\bm{x}}_{d+1}$, where $\hat{\bm{x}}_{d+1}$ is the vector of predictions.
    
    Here is an example of interaction: $\left \langle \bm{x}_{d+1}, \hat{\bm{z}}_m, \hat{\bm{z}}_m \right \rangle = \left \langle \bm{x}_{d+1}, \bm{x}_{\alpha}, \bm{x}_{\beta} \right \rangle/d^2$. If $\alpha, \beta \in P$, then for each data sample, the parity $x_{d+1} x_{\alpha} x_{\beta} = \prod_{i \in P \setminus \{ \alpha, \beta \}} x_i$ is a random variable with expectation $1 - \rho > 0$ if $\rho < 1$. If all bits are uniformly generated, i.e. $\rho = 1$, then this variable has mean zero. Since the training distribution $\mathcal{D}^P_{\rho}$ is imbalanced, the variable has a positive mean. But if at least one of $\alpha$ and $\beta$ is not in $P$, such variables are bounded with in a small value. With a sufficiently large $d$ and $n$, the computation leads to positive weight updates for relevant bits, and negative updates for others. Such a huge gap allows an attention layer to identify relevant bits and make correct predictions.
\end{proof}

\subsection{Applying CoT for the generalized problem}

In this section, the value of $\rho \in [0,1]$ is not restricted, and we focus on the impact of CoT decomposition on solving the \textbf{generalized} $k$-parity problem under two types of nuances: (1) the information leaked (regulated by $\rho$), and (2) the level of data poisoning, defined in Section 3.

Our main result \Cref{main} characterizes an equivalent condition on successful training with this CoT decomposition with respect to (1) the distribution shift $\rho$ and (2) the quantity and distribution of data poisoning. Before stating the theorem, we first define a few ingredients for the characterization.

\paragraph{Set of poisoning.} We first define a few notations for quantifying the data poisoning on different positions and their interactions. Given $n$ data samples of dimension $d+k-1$ including the CoT steps, for each node $i \in \{ d+1, \ldots, d+k-1 \}$, let $U_i \subseteq [n]$ be the set of indices with corrupted samples. For any two nodes $a, b \in \{ d+1, \ldots, d+k-1 \}$, define the set $I_{a,b} = U_a \cap U_b$. For any $i \in I_{a,b}$, coordinates $a$ and $b$ in the training sample $\bm{x}^i$ are both flipped by multiplying $-1$, so the effect of poisoning cancels out. For three nodes $a, b, c \in \{ d+1, \ldots, d+k-1 \}$, we define the following set:
\begin{equation*}
    U_{a, b, c} = \left \{ \left( U_a \cup U_b \cup U_c \right) \setminus \left( I_{a,b} \cup I_{a,c} \cup I_{b,c} \right) \right \} \cup \left( U_a \cap U_b \cap U_c \right).
\end{equation*}
For two nodes $a$ and $b$, the set $U_{a, b} = (U_a \cup U_b) \setminus (U_a \cap U_b)$ is defined with the same rationale. See \Cref{fig:venn} and \ref{fig:poisoning} for visualizations. We will denote $q_{a,b,c} = | U_{a,b,c} |$ for the size of this set.

% \begin{wrapfigure}{r}{0.45\textwidth}
%     \centering
%     \includegraphics[width=0.42\textwidth]{figures/Venn.pdf}
%     \caption{Given three sets $U_a$, $U_b$, and $U_c$, the set $U_{a,b,c}$ is the union of all three sets excluding elements that belong to exactly one intersection of two sets.}
%     \label{fig:venn}
% \end{wrapfigure}

% \begin{wrapfigure}{l}{0.45\textwidth}
%     \centering
%     \includegraphics[width=0.42\textwidth]{figures/three-way poisoning.png}
%     \caption{Each of the three axes represents the samples of nodes $a$, $b$, and $c$. Red segments indicate flipped samples. Flips from $0.2n$ to $0.3n$ and from $0.5n$ to $0.7n$ cancel out due to flipping twice. Other red segments are harmful, occurring at one node only ($0.1n \to 0.2n$ and $0.7n \to 0.9n$) or at all three nodes ($0.3n \to 0.5n$).}
%     \label{fig:poisoning}
% \end{wrapfigure}
\begin{figure}[ht]
    \centering
    \begin{minipage}[t]{0.48\textwidth}
        \centering
        \includegraphics[width=0.55\textwidth]{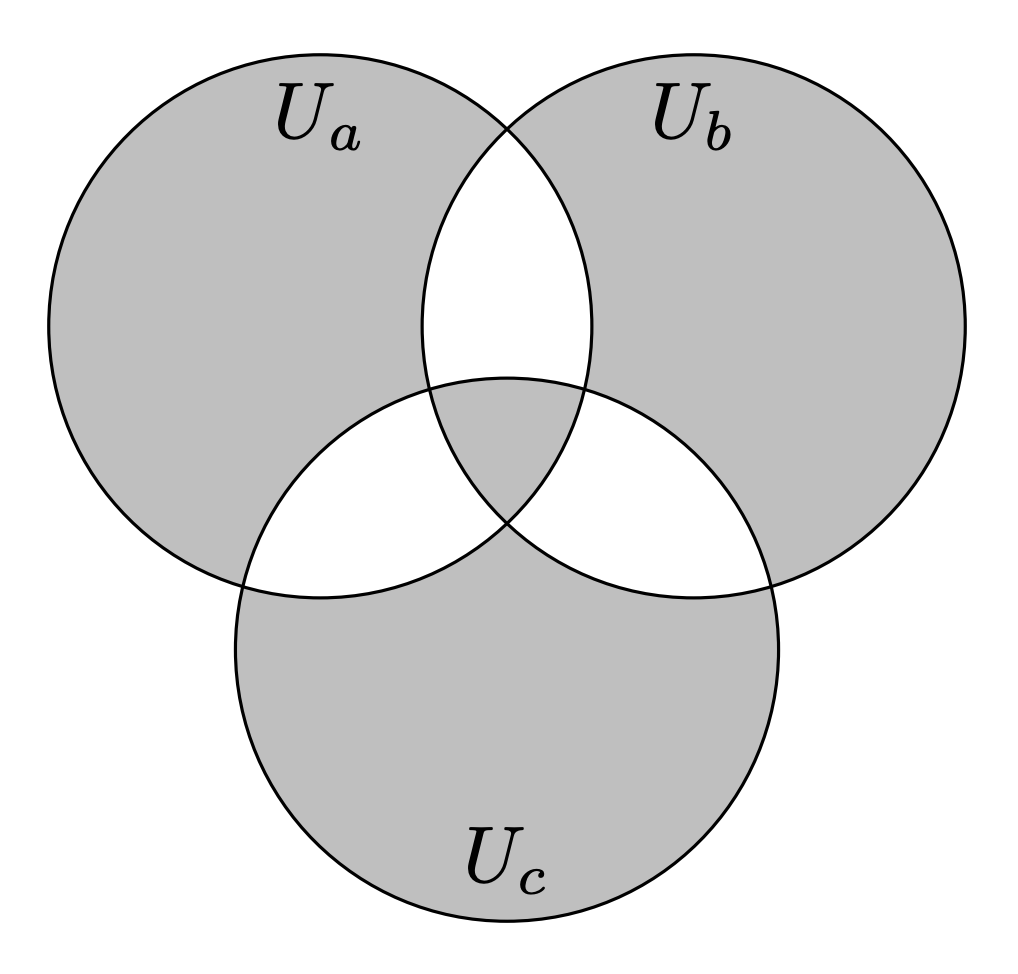}
        \caption{Given three sets $U_a$, $U_b$, and $U_c$, the set $U_{a,b,c}$ is the union of all three sets excluding elements that belong to exactly one intersection of two sets. The shaded region represents the impactful corruption. Corruption in the white region is cancelled out.}
        \label{fig:venn}
    \end{minipage}%
    \hfill
    \begin{minipage}[t]{0.48\textwidth}
        \centering
        \includegraphics[width=\textwidth]{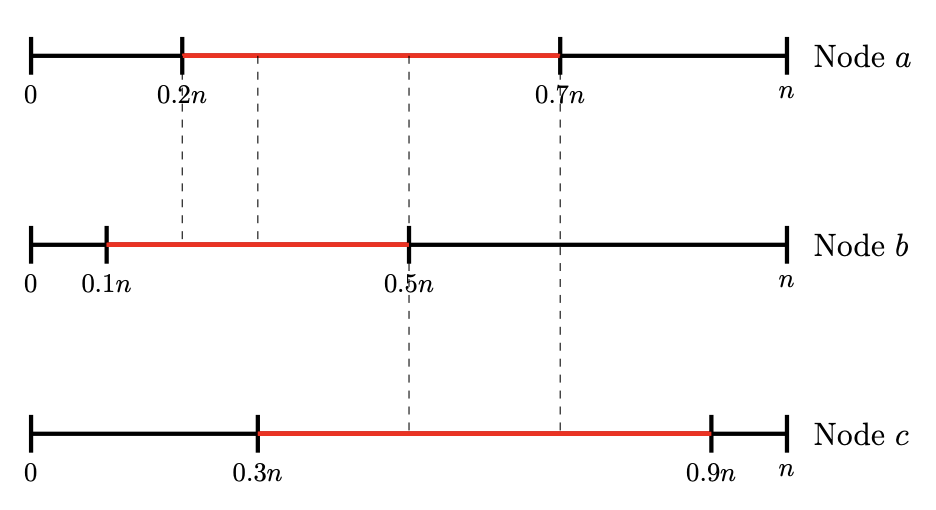}
        \caption{Each axis represents samples of nodes $a$, $b$, and $c$. Red segments show flipped samples. Flips from $0.2n$–$0.3n$ and $0.5n$–$0.7n$ cancel out due to two flips. Other red segments are harmful: flipped at one node ($0.1n$–$0.2n$, $0.7n$–$0.9n$) or all three nodes ($0.3n$–$0.5n$).}
        \label{fig:poisoning}
    \end{minipage}
\end{figure}

\paragraph{Quantities for poisoning characterization.} Exact quantities of gradient updates $\partial L / \partial w_{j,m}$ for every $1 \leq j < m$ depend on the indices of $j$ and $m$. Those quantities are necessary to characterize the poisoning and state our main result \Cref{main}, but they have considerably long expressions.

We define the following three functions. They are ingredients to compute the differences for gradient updates between correct and incorrect nodes. The final expressions of gradient updates have two major terms, the first term distinguished on (1) CoT steps of height one $\left( d \leq m < d+\frac{k}{2} \right)d$ or higher $\left( d + \frac{k}{2} < m \leq d+k-1 \right)$, and (2) location of $j$: either $h[j] = 0$ or $h[j] \geq 1$. The second term only distinguishes on the latter.

For any $1 \leq j \leq m$, the name $G_{h[m]=1}$ denotes the signal difference for the CoT steps of height one with distinct levels of $j$, and $G_{h[m]>1}$ is defined similarly for later CoT steps. The $S(m)$ is the coefficient of the signal difference given by the second term between $j$ on the lowest level and on higher levels. Their expressions are formally stated below.
%\begin{equation}
%    S(\rho, d, k, m) = - \frac{2(k-1)}{(m-1)^2} (1-\rho) + \sum_{d < \alpha < m, \alpha \neq j} \frac{2(1-2q_{\alpha})}{(m-1)^2} (1-\rho) + \sum_{\alpha = d+1}^{m-1} \frac{8k(1-2q_{\alpha})}{(m-1)^2} (1-\rho) + \sum_{\alpha, \beta = d+1}^{m-1} \frac{4(1-2q_{\alpha, \beta, j})}{(m-1)^2} (1-\rho)
%\end{equation}
\begin{small}
\begin{equation} \label{eq:G(h[m]=1)}
    G_{h[m] = 1} (m, j, \rho) = - \frac{2(1-2q_m)}{(m-1)^2} - \frac{2(k-1)(1-2q_m)}{(m-1)^2} (1-\rho) + \sum_{\alpha=d+1}^{m-1} \frac{2(1-2q_{m,\alpha,j})}{(m-1)^2} (1-\rho).
\end{equation}
\begin{equation} \label{eq:G(h[m]>1)}
    G_{h[m] > 1} (m, j, \rho) = - \frac{2(1-2q_{m, c_1[m], c_2[m]})}{(m-1)^2} - \sum_{d < \alpha < m, \alpha \neq c'[m]} \frac{2(1-2q_{m, c[m], j})}{(m-1)^2} (1-\rho) + \frac{2k(1-2q_m)}{(m-1)^2} (1-\rho).
\end{equation}
\begin{equation} \label{eq:term-2-diff}
    S(m, j) = - \frac{2(k-1)}{(m-1)^2} + \sum_{d < \alpha < m, \alpha \neq j} \frac{2(1-2q_{\alpha})}{(m-1)^2} + \sum_{\alpha = d+1}^{m-1} \frac{8k(1-2q_{\alpha})}{(m-1)^3} -  \sum_{\alpha, \beta = d+1}^{m-1} \frac{4(1-2q_{\alpha, \beta, j})}{(m-1)^3} - \sum_{\alpha, \beta \in P} \frac{4(1-2q_m)}{(m-1)^3}.
\end{equation}
\end{small}
Note that for steps $m$ with $h[m]=1$, it always holds that $q_m = q_{m,a,b}$ if $a, b \in [d]$ because the inputs have no poisoning, so all flips in step $m$ are harmful for computing $\langle \bm{x}_m, \bm{x}_a, \bm{x}_b \rangle$.

We now state our main theoretical result in this work. We answer Question 3 by providing a comprehensive statement on the impact of both distribution shift $\rho$ and the poisoning structure among the CoT steps on the final performance of the predictor within the selected CoT decomposition. Such impact leads to an equivalent (both necessary and sufficient) condition on success of training.

\begin{theorem} \label{main}
    Let $n = \Omega (d^{2 + \epsilon})$ and $\mu > -2 - \epsilon/4$ for $\epsilon > 0$. Suppose $d$ is sufficiently large. With softmax attention and all-zero initializations on weights, a transformer with the prescribed CoT mechanism can solve the uniform parity problem with an error rate converging to zero as $d \to \infty$ if and only if all the following conditions hold.
    \begin{itemize} [noitemsep, nolistsep, leftmargin=*]
        \item For every $m \in \left \{ d+1, \cdots, d + \frac{k}{2} \right \}$, the following inequality satisfies:
        \begin{equation} \label{condition-one-height}
            B_m = \max_{d < j < m} \left\{ -\frac{2\rho (1-2q_m)}{(m-1)^2}, \quad G_{h[m]=1} (m, j, \rho) + (1-\rho) S(m, j) \right \} < -O(d^{\mu}).
        \end{equation}
        \item For every $m \in \left \{ d + \frac{k}{2} + 1, \cdots, m-1 \right \}$, the following inequality satisfies:
        \begin{equation} \label{condition-higher-height}
            B_m = \max_{d < j < m} \left\{ -\frac{2\rho (1-2q_{m, c_1[m], c_2[m]})}{(m-1)^2}, \quad G_{h[m] > 1} (m, j, \rho) - (1-\rho) S(m, j) \right \} < -O(d^{\mu}).
        \end{equation}
    \end{itemize}
    In particular, if the conditions above hold for every $m$, then let $B = \displaystyle \max_{d < m <  d+k-1} B_m$, for every input $\bm{x} \in \{ \pm 1 \}^d$, the true prediction $y$ and the prediction $\hat{y}$ by the trained predictor after one step update with learning rate $\eta = \Theta(d^{-\mu})$ satisfy $| \hat{y} - y | \leq O(d^{-B(\mu-2-\epsilon/4)})$ with probability at least $1 - \exp(d^{\epsilon/2})$. If the conditions fail for at least one CoT step, then $\lim_{d \to \infty} \mathbb{E} \left[ | \hat{y} - y | \right] = \Omega(1)$.
    %$\left \lvert U_{m, \alpha, \beta} \right \vert < n/2$. In particular, after the first update, if $\left \lvert U_{r, \alpha, \beta} \right \vert /n = q < 1/2$, then $\lvert \lvert y - \hat{y} \rvert \rvert \leq O(d^{-\rho(1-2q)\epsilon/8})$ with probability $1 - \exp \left( -d^{\epsilon/2} \right)$. Otherwise, $\lim_{d \to \infty} \mathbb{E}_{D_{\rho}} \left[ y - \hat{y} \right] = 1$.
\end{theorem}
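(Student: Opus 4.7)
The plan is to derive an explicit expression for the one-step gradient update $\Delta w_{j,m}$ at every attention weight, and then show that the sign and magnitude of the gap $\Delta w_{j^*,m} - \Delta w_{j',m}$ between the ``correct'' predecessor $j^*$ (either a sibling $c_1[m],c_2[m]$ or, when $h[m]=1$, the relevant input bits in $P$) and any competing $j'$ is governed exactly by the quantities on the left of \eqref{condition-one-height} and \eqref{condition-higher-height}. Under all-zero initialization, the softmax scores are uniform $1/(m-1)$, so $\hat{\bm z}_m = (m-1)^{-1}\sum_{\alpha<m}\bm x_\alpha$ and $\partial L/\partial w_{j,m}$ reduces to a weighted sum of triple inner products of the form $\langle \bm x_m,\bm x_\alpha,\bm x_j\rangle$ and $\langle \bm x_j,\bm x_\alpha,\bm x_\beta\rangle$, very similar in shape to the gradient that drove the analysis of \Cref{easy-input-output}. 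I would first write each such triple as $n^{-1}\sum_i \xi^i$ for $\xi^i\in\{\pm1\}$ and apply Hoeffding; with $n=\Omega(d^{2+\epsilon})$ this gives concentration at rate $O(d^{-1-\epsilon/2})$, which will serve as the noise floor throughout.

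Next I would compute the expectation of each triple under $\mathcal D_\rho^P$ combined with the poisoning model. A triple whose support lies inside the tree rooted at $m$ has expectation $1-\rho$ (it reduces to a parity of ``leaked'' bits in $P$), while any triple that includes a truly irrelevant coordinate has expectation zero. Poisoning multiplies the first kind by $(1-2q_{\cdots})$, where the subscript picks out exactly the set $U_{a,b,c}$ of flips that fail to cancel, as the Venn picture in \Cref{fig:venn,fig:poisoning} makes clear. Collecting terms, separating the contribution of $\bm x_m$ (first ``term 1'') from that of $\bm x_j$ coupled through $\hat{\bm z}_m$ (``term 2''), and subtracting the values for a competing $j'$, the signal in $\Delta w_{j^*,m}-\Delta w_{j',m}$ is precisely $\eta\cdot B_m$, matching \eqref{eq:G(h[m]=1)}, \eqref{eq:G(h[m]>1)}, \eqref{eq:term-2-diff}; the residual is either $O(d^{-1-\epsilon/2})$ concentration error or $O(d^{-3})$ perturbation from the smoothing region of $\phi$.

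With this decomposition in hand, the sufficiency direction proceeds as follows. If $B_m<-\Omega(d^\mu)$ at every CoT position, then choosing $\eta=\Theta(d^{-\mu})$ makes the post-update gap $w_{j^*,m}-w_{j',m}$ at least $\Omega(1)$, so the softmax at step $m$ places mass $1-O(d^{-B(\mu-2-\epsilon/4)})$ on the correct predecessors. Because updates are simultaneous but generation is sequential, I would then induct on $h[m]$ to propagate the error through $\phi$ up the complete binary tree, using the symmetry and $\phi'(0)=0$ conditions to keep the compounded perturbation at the stated rate $O(d^{-B(\mu-2-\epsilon/4)})$. For necessity, if some $B_m$ fails to clear the noise floor $-\Omega(d^\mu)$, the signal at that CoT layer is dominated by Hoeffding error, so the softmax there stays asymptotically uniform, $\hat{\bm x}_m$ has vanishing correlation with $\bm x_m$, and the mismatch at level $h[m]$ propagates multiplicatively through the remaining levels to yield $\mathbb E[|\hat y - y|]=\Omega(1)$.

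The principal obstacle is the combinatorial bookkeeping in the second paragraph: tracking which triples $(j,\alpha,\beta)$ contribute nonzero expectation in each of the two height regimes, keeping track of the sign flip on the second term $S(m,j)$ between \eqref{condition-one-height} and \eqref{condition-higher-height} (which comes from whether $j$ is a child of $m$ or a descendant reached via an earlier CoT token), and matching the empirical poisoning pattern to the cancellation structure encoded by $U_a$, $U_{a,b}$, $U_{a,b,c}$. Once those identities are verified the rest of the argument is a careful but essentially standard softmax-plus-Hoeffding composition in the style of \Cref{easy-input-output}; the first observation that $q_m = q_{m,a,b}$ whenever $h[m]=1$ and $a,b\in[d]$ should substantially shrink the case analysis at the bottom of the tree.
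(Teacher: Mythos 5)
Your proposal follows essentially the same architecture as the paper's proof: compute the one-step gradient at all-zero initialization where the softmax is uniform $1/(m-1)$, expand $\partial L/\partial w_{j,m}$ into triple inner products, concentrate them via Hoeffding under $n=\Omega(d^{2+\epsilon})$, compute expectations under $\mathcal D_\rho^P$ with the $U_{a,b,c}$ bookkeeping, and then read the sufficient/necessary conditions off the gradient gap $\Delta_{m,j^*,j'}$. Your identification of the signal in that gap with $B_m$ and the use of the $q_m=q_{m,a,b}$ identity at the bottom of the tree are exactly how the paper organizes Sections B.1--B.3. For sufficiency, the paper also propagates the per-step error through $\phi$ (it does so by a linear recursion $\epsilon_m \le O(d^{-2-\epsilon/4+\mu'}) + O(\epsilon_{m-1})$ rather than a depth induction, but this is cosmetic).

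However, two details in your necessity sketch are imprecise in a way that would need correction. First, you describe the failure mode as ``the softmax stays asymptotically uniform'' at step $m$. That is not what happens when only one $j$ fails to satisfy the gap condition: the attention concentrates on the \emph{three} nodes $c_1[m], c_2[m], j$ (each receiving weight close to $1/3$), not over all $m-1$ predecessors. The paper exploits precisely this three-way split to compute a clean $\Omega(1)$ lower bound — on the event $x_{c_1[m]}=-x_{c_2[m]}$ (probability $1/2$), the truth is $\phi(0)=-1$ but the prediction is $\phi(\pm 1/3)=-1/3$, giving error $\ge 1/3$. Your ``vanishing correlation'' framing would need a separate argument and does not directly produce this constant. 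Second, ``propagates multiplicatively through the remaining levels'' is misleading: multiplying quantities bounded by $1$ in magnitude can \emph{damp} an error, not preserve it. The paper instead shows $\mathbb E|x_{p[m]}-\hat x_{p[m]}|=\mathbb E|x_m x_{m'}-\hat x_m\hat x_{m'}|\ge\Omega(1)$ using that the sibling $x_{m'}$ is a $\pm1$ factor, so the error at node $m$ is \emph{preserved} at the parent, not multiplied by something less than one. Finally, a small conceptual slip: the ``correct predecessor'' for $h[m]=1$ is always just the two specific children $c_1[m],c_2[m]\in P$ assigned to $m$ in the decomposition tree, not all of $P$; conflating these would break the case analysis in \Cref{eq:G(h[m]=1)}.

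Modulo these repairs to the necessity argument, your plan is the paper's proof.
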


%The statement answers Question 3 in Section 1 since the condition

%The statement answers the Questions 1, 2 and 3 in Section 1. It claims that both the training distribution and the size of $U_{a,b,c}$ (i.e. the level of corruption for parent-children triples) are critical for the quality of the final output. Before presenting the more technical proof sketch, we first explain some critical implications.

\paragraph{Vulnerability against poisoning.}
The theorem provides a rigid equivalent condition for the algorithm to succeed. Even for one intermediate step $m \in \{ d+1, \cdots, d+k-1 \}$, if $q_{m, c_1[m], c_2[m]} \geq 0.5$, then the condition in \Cref{condition-one-height} or \ref{condition-higher-height} no longer holds, and consequently the training algorithm would not succeed. Recall that for this task, there are $(k-1)n$ values in the training data set for all intermediate steps, but $0.5n$ flips are sufficient to fail the training. Conclusively, this algorithm has a low poisoning tolerance of $\frac{1}{2(k-1)}$, and this threshold approaches to zero as $k$ become large. This analysis answers Questions 2.(a) and 2.(b) in Section 1.

Regarding distributions shift, \Cref{main} leads to an immediate corollary, which reveals a seemingly paradoxical conclusion.

\begin{corollary} [Maximum leakage of information]
    If $\rho = 0$, i.e. all non-relevant bits are still uniformly generated but all relevant bits are either all $-1$ or $1$, then the training always fails with this CoT decomposition.
\end{corollary}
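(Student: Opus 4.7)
The plan is to derive this corollary as a direct consequence of \Cref{main} by showing that at $\rho = 0$ the necessary condition in \eqref{condition-one-height} (and analogously \eqref{condition-higher-height}) is violated for every intermediate step $m$. Setting $\rho = 0$ collapses the first argument of the maximum, namely
\[
-\frac{2\rho(1-2q_m)}{(m-1)^2} \quad \text{or} \quad -\frac{2\rho(1-2q_{m,c_1[m],c_2[m]})}{(m-1)^2},
\]
to exactly $0$. Consequently $B_m \geq 0$ for every $m$, which cannot satisfy the required strict bound $B_m < -O(d^{\mu})$ (note that $d^{\mu} > 0$). Since the conditions in \Cref{main} are necessary as well as sufficient, the failure clause applies and $\lim_{d \to \infty} \mathbb{E}[|\hat{y} - y|] = \Omega(1)$.

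I would also supply the mechanistic intuition that makes this rigorous result unsurprising despite its superficially counterintuitive flavor. When $\rho = 0$, the distribution $\mathcal{D}^P_0$ forces every sample to have the bits in $P$ perfectly correlated: either all $+1$ or all $-1$. Because $k = 2^v$ is even and each CoT node $m$ with $h[m] \geq 1$ corresponds to a product over exactly $2^{h[m]}$ leaves in $P$, every ground-truth CoT label $\bm{x}_m$ is deterministically $+1$ on every training example. Thus the training signal carries no information distinguishing indices in $P$ from indices in $[d] \setminus P$: the loss-minimizing attention profile is completely insensitive to the identity of the relevant coordinates, and the softmax attention scores remain uniform across all input positions.

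The main obstacle is not in the calculation itself, which is essentially substitution, but in justifying that the \emph{necessity} direction of \Cref{main} really bites here. Specifically I need to confirm that when the first term of the max is $0$, no cancellation from the second term (via $G$ and $S$) can rescue the condition, and that the "failure" conclusion of \Cref{main} indeed requires only the violation on a single $m$. The former is automatic because the max can only increase above $0$; the latter is exactly the quantifier structure of the theorem. With these two sanity checks in place, the corollary follows in two or three lines, and I would keep the writeup to a short paragraph invoking \Cref{main} plus the intuitive explanation above.
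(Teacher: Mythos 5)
Your core argument is exactly the paper's: at $\rho = 0$ the first entry of the max in \Cref{condition-one-height}/\Cref{condition-higher-height} vanishes, so $B_m \geq 0$ for every intermediate node, the strict bound $B_m < -O(d^{\mu}) < 0$ cannot hold, and the necessity direction of \Cref{main} forces failure. (The paper states this tersely as ``$B_m = 0$''; your phrasing $B_m \geq 0$ via the max is if anything slightly more careful and needs no claim about the sign of the $G + (1-\rho)S$ term.) The one thing to correct is your mechanistic gloss: it is not true that the training signal ``carries no information distinguishing $P$ from $[d]\setminus P$''. At $\rho = 0$ the bits in $P$ are still perfectly correlated with each other while bits outside $P$ are i.i.d.\ uniform, so attending to any pair of $P$-tokens yields $\hat{z}_m = \pm 1$ and $\phi(\hat{z}_m)=1=x_m$, whereas attending outside $P$ does not; the gradient therefore \emph{does} separate $P$ from its complement. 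What collapses at $\rho=0$ is the finer distinction among tokens already associated with $P$: the correct children $c_1[m], c_2[m]$ become indistinguishable from other non-child relevant/intermediate tokens, so the attention spreads over the wrong subset and, as the paper notes, some relevant bits end up multiplied more than once in the final parity. This matters only for the intuition paragraph, not for the formal two-line derivation, which stands as written.
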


\begin{proof}
    If $\rho = 0$, then $B_m = 0$ for any intermediate node $m$.
\end{proof}

\paragraph{Explanation of the ``paradox''.} If $\rho = 0$, the locations of relevant bits are exposed to the maximum extent, so intuitively, the CoT protocol should be able to solve it even more efficiently than the case when $\rho = 1$. However, within this CoT design, this case leads to an immediate, absolute failure. The exact reason will be briefly outlined in the proof sketch and comprehensively presented in the full proof. At a high level, when $\rho = 0$, the gradient update extracts identical information from \textit{correct} nodes (children) and \textit{incorrect} nodes (non-children) during the CoT steps. However, identifying the location of the children is essential to ensure that the final output is indeed the multiplication of the relevant bits. If errors on this step exist, some relevant bits are multiplied multiple times and therefore the output will be different from the truth.

\paragraph{Ever-present harm of distribution shift.} The impact of $\rho$ is concrete even if $\rho > 0$. Recall that for the uniform case where $\rho = 1$, we have
\begin{equation}
   \begin{cases}
       G_{h[m]=1} (m, 1) = - \frac{2(1-2q_m)}{(m-1)^2} = - \frac{2(1-2q_{m, c_1[m], c_2[m]})}{(m-1)^2}, \\
       G_{h[m]>1} (m, 1) = - \frac{2(1-2q_{m, c_1[m], c_2[m]})}{(m-1)^2}.
   \end{cases}
\end{equation}
Thus, $B_m = - 2(1-2q_{m, c_1[m], c_2[m]})/(m-1)^2$ for any $m$ if $\rho=1$. Let $m' = \argmax_{m} B_m$, then if $\rho < 1$, denote the values computed in \Cref{condition-one-height} and \ref{condition-higher-height} as $\{ B'_m \}_{m=d+1}^{d+k-1}$, observe that $B_{m'} > B_m$. Hence, $B' = \max_m B'_m > B'$ and the convergence rate slows down for every $\rho < 1$. Furthermore, clearly a lower $\rho$ leads to a higher $B_m$. So, we can conclude that distribution shift always damages the training if it exists, and low shift is always better than high shift. This answers Question 1.

\begin{corollary} [Simple characterization without distribution shift] \label{rho=0-failure}
    If the training and testing distribution are identical, i.e. $\rho = 1$, then the CoT decomposition succeeds if and only if $q_{m, c_1[m], c_2[m]} \leq 0.5 - O(d^{\mu})$ for every $m \in \{ d+1, \ldots, d+k-1 \}$.
\end{corollary}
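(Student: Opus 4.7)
The plan is to obtain this corollary as a direct specialization of \Cref{main} at $\rho = 1$, by tracking which summands in the quantities defined by \eqref{eq:G(h[m]=1)}, \eqref{eq:G(h[m]>1)}, and \eqref{eq:term-2-diff} survive and which are annihilated by the prefactor $(1-\rho)$.

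First I would substitute $\rho = 1$ into the two defining inequalities \eqref{condition-one-height} and \eqref{condition-higher-height}. In both $G_{h[m]=1}(m,j,\rho)$ and $G_{h[m]>1}(m,j,\rho)$, every summand carrying an explicit $(1-\rho)$ factor vanishes, leaving only the leading terms $-2(1-2q_m)/(m-1)^2$ and $-2(1-2q_{m,c_1[m],c_2[m]})/(m-1)^2$ respectively; moreover the correction $(1-\rho)S(m,j)$ is identically zero, so the $j$-dependence inside both maxima evaporates and the max is witnessed by any $j$. Next I would invoke the observation recorded right after \eqref{eq:term-2-diff}: when $h[m]=1$ both children of $m$ lie at level zero and hence carry no corruption, which forces $q_m = q_{m, c_1[m], c_2[m]}$. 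Together with the fact that at $\rho = 1$ the first argument of the max, $-2\rho(1-2q_m)/(m-1)^2$, coincides with $G_{h[m]=1}(m,j,1)$, the two branches of \Cref{main} collapse into the single scalar condition
\[
B_m \;=\; -\frac{2(1-2q_{m,c_1[m],c_2[m]})}{(m-1)^2} \;<\; -O(d^{\mu})
\quad \text{for every } m \in \{d+1, \dots, d+k-1\}.
\]

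The last step is to rearrange this inequality. Since $k = \Theta(d)$ and $m = \Theta(d)$, the prefactor $1/(m-1)^2$ is $\Theta(d^{-2})$, so after absorbing this $d^{-2}$ scaling into the $O(d^{\mu})$ convention used in the corollary's statement, the condition is equivalent to $q_{m,c_1[m],c_2[m]} \leq 1/2 - O(d^{\mu})$ for every intermediate $m$. The forward direction of the corollary is then the sufficient direction of \Cref{main}; for the converse I would use the failure clause of \Cref{main} stating that if the threshold fails for even one $m$, then $\lim_{d \to \infty} \mathbb{E}[|\hat y - y|] = \Omega(1)$, which certainly precludes success.

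I do not expect any genuine technical obstacle here: once \Cref{main} is in hand, this corollary is essentially a bookkeeping exercise, and its correctness hinges entirely on two simple points, namely (i) every $(1-\rho)$-weighted correction disappears at $\rho = 1$, and (ii) the identity $q_m = q_{m,c_1[m],c_2[m]}$ for height-one nodes, which is what allows the two structurally different branches of \Cref{main} (height-one versus higher-height steps) to be expressed as a single uniform condition indexed by $m$. The only sentence that genuinely requires care in the write-up is this height-one identification.
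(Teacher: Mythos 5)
Your proposal is correct and follows essentially the same route as the paper's own proof: substitute $\rho=1$ to annihilate the $(1-\rho)$-weighted terms, observe that the height-one branch satisfies $q_m = q_{m,c_1[m],c_2[m]}$, and collapse both branches of \Cref{main} to a single scalar condition on $q_{m,c_1[m],c_2[m]}$. You actually supply more detail than the paper (which stops after stating the two collapse identities), including the final rearrangement from $B_m < -O(d^\mu)$ to a bound on $q$ — and you correctly flag that the $(m-1)^{-2} = \Theta(d^{-2})$ prefactor shifts the exponent, so the corollary's $O(d^\mu)$ is really being used in a slightly overloaded sense; this is a notational looseness inherited from the paper rather than an error in your argument.
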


\begin{proof}
    If $\rho  = 1$, then clearly
    \begin{equation}
        G_{h[m] = 1} + (1-\rho)S = G_{h[m] > 1} - (1-\rho)S = - \frac{2(1-2q_{m, c_1[m], c_2[m]})}{(m-1)^2}.
    \end{equation}
    Observe that if $h[m] = 1$, i.e. $m \in \left \{ d+1, \cdots, d + \frac{k}{2} \right \}$, then $q_m = q_{m, c_1[m], c_2[m]}$ because there is no poisoning in the inputs.
\end{proof}

%Before presenting the more technical proof sketch, we first explain the intuition on the statement, and the ``counter-intuition'' on the relationship between $\rho$ and the convergence performance. Both phenomena are due to the quantities of the gradient updates under specific $\rho$ and $q$, which will be discussed in the proof sketch and more carefully analyzed in the proof. At high level, for each CoT step represented as a node, we expect the gradient updates on the correct positions to dominate those for the other nodes. A high corruption level reduces the correct update and therefore weakens such domination, and this phenomenon is intuitive because of the nature of poisoning.

%However, the role of imbalance on training data is counter-intuitive: A lower $\rho$ implies more information leak on the relevant bits, and intuitively this should help the predictor identify the relevant bits. It is then natural to assume that a training protocol empowered by CoT shall perform better on this easier problem than the more difficult uniform problem. However, a lower $\rho$ would slow down convergence as shown in the error rate. As we will see in the proof, information leak leads to non-trivial gradient updates on incorrect positions and makes the predictor harder to distinguish relevant and irrelevant bits.

We have answered all of Questions 1, 2 and 3 in the introduction. We conclude this section by a proof sketch of \Cref{main}, which summarizes the technical analysis that answers Question 4. The entire proof will be presented in the appendix.

\begin{proof}[Proof sketch of \Cref{main}]
    %For both directions, we first compute the gradients for every intermediate step. One critical difference is, for any $d+1 \leq m \leq d+k-1$, the parity $\langle x_m, x_{c_1[m]}, x_{c_2[m]} \rangle$ is always one if the labels are correct.% Clearly, if $x_m$ is wrongly flipped, then the value is always $-1$.
    
    For the \textit{if} direction, we directly compute the gradients, and found that the two quantities in \Cref{condition-one-height} and \ref{condition-higher-height} are gradient differences between correct and incorrect nodes. A low enough value of $B_m$ for every $m$ ensures the two correct nodes have larger weights than incorrect nodes, and the gap must be large enough for the attention layer to distinguish correct and incorrect nodes as $d$ becomes large. As a result, the attention scores $\sigma_j(w_m)$ is close to zero if $p[j] \neq m$, and are close to $0.5$ otherwise.% In simple terms, the model at position $m$ uses both attention scores at children nodes equally to compute its own value, as it should according to the binary tree structure. Such discrepancy exists because of the constant one value for parities within those parent-children triples: Given $n$ samples, for each parent-children triple $(m, \alpha, \beta)$, we easily obtain $\langle \bm{x}_m, \bm{x}_{\alpha}, \bm{x}_{\beta} \rangle = n$; this signal dominates parities for other groups. However, corruption reduces this value to $(1-q) \times 1 \times n + q \times (-1) \times n = (1-2q)n$. If $q < 0.5$, then clearly this signal is still positive and dominates other interactions and ultimately assigns high attention scores for children positions. Moreover, the value of $\rho$ matters, because as $\rho$ decreases, the dominating signal has a larger magnitude.
    
    For the \textit{only if} direction, we prove the contrapositive: If the conditions do not hold, i.e. $B_m$ is not low enough for an intermediate step $m$. The key part is still analyzing the gradient update, but since the condition fails for $m$, the gradient update for at least one node $j$ such that $p[j] \neq m$ is now equal or higher than updates for children. Consequently, the attention score for that incorrect node is equal or higher than the scores for correct nodes. As a result, at least one CoT step will not learn an accurate predictor. We will show a lemma that non-vanishing error on any CoT node crashes the final predictor and hence prove the failure in this scenario.
\end{proof}

\section{Experiments}

\begin{figure}
    \centering
    \includegraphics[scale=0.25]{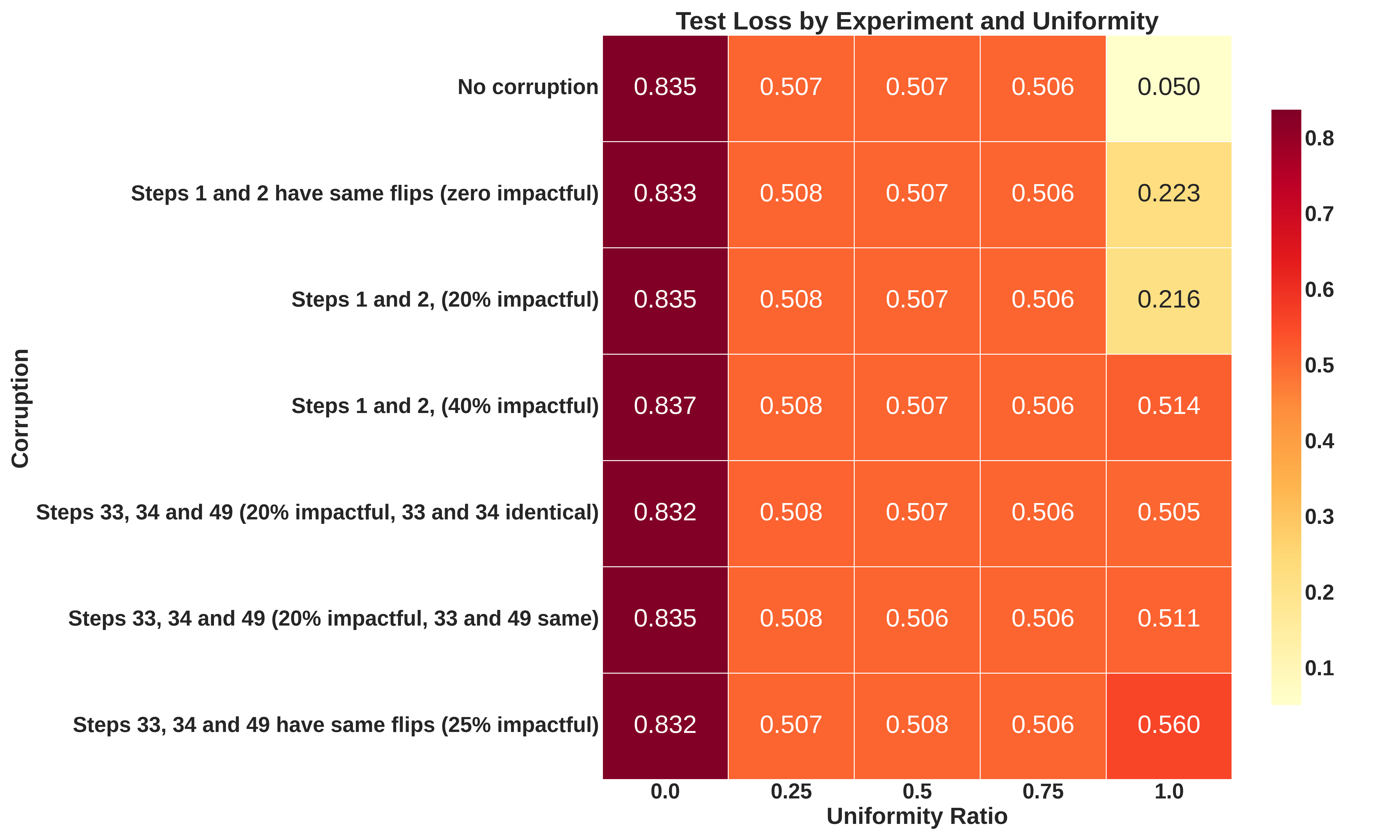}
    \caption{Heatmap of testing loss with respect to both distribution shift and poisoning structure. Each value on the horizontal axis is the ratio of uniformly generated inputs, the same as $\rho$ in Section 4. The number inside each grid is the test loss under that particular circumstance.} 
    \label{fig:heatmap}
\end{figure}

In this section, we provide numerical experiments which support our theoretical analysis. The statement of \Cref{main} holds with a large enough $d$ to overcome low-order error terms and perturbations. In our experiments, we implement a more realistic setting on dimensions and learning rates with an extensive period of training time. We train a simple one-layer transformer with absolute encoding, softmax attention layer and the feedforward layer defined in Section 3.2. The investigated problem is exactly the same as the one in Section 4.2. We set the input dimension to be 128 and by default $k=64$. The value of $k$ can be easily adjusted within 128, but for larger $d$, e.g. 256 or 512, the computation becomes prohibitively slow even with A100 GPU.

Our implementation automatically generates synthetic data from $\{ \pm 1 \}^{128}$ and randomly selects $k$ relevant bits. Once the inputs are generated and relevant bits are selected, the program then constructs a decomposition tree like the illustration in \Cref{fig:cot-tree}. Next, using the inputs, the program computes the ground truth samples for CoT training by multiplying the correct bits element-wise following the decomposition tree structure.

The key parameters for our problem other than the standard ones above are the ratio of uniformly generated inputs and structure/quantity of data poisoning. After the inputs are generated, the program allows us to input the variable \texttt{uniform\char`_prob}, which is $\rho$ defined in Section 4, and then $(1-\texttt{uniform\char`_prob})n$ samples will have their coordinates at target bits to be changed to either all $-1$ or $1$, both with probability 0.5. We may also easily inject poisoning with any quantity and structure by editing the list \texttt{flip\char`_configurations}.

We experimented over 35 cases with seven variants of poisoning quantity and structure and five values of \texttt{uniform\char`_prob}. \Cref{fig:heatmap} shows the training results after 5000 epochs in every case when $d = 128$ and $k = 64$. We first note that the performance under no distribution shift strictly surpasses any other case with distinct poisoning structure and quantity, with the exception of the case where the first and second CoT steps have $40\%$ of ground truth labels flipped. The high loss for this poisoning structure even without distribution shift is justifiable as the level of impact poisoning is high. We also observe that, even without distribution shift, the empirical poisoning tolerance is not as high as $50\%$, and the location of poisoning matters. Theoretically, even with a poisoning of $40\%$, if $d$ is large enough the performance should not be different with the performance under the case without any poisoning, and the location of poisoning should not matter as long as the conditions in \Cref{eq:G(h[m]=1)} and \ref{eq:G(h[m]>1)} matters. But an eligible $d$, as we will show in the proof of \Cref{main}, must be astronomically large and cannot be empirically tested. Nevertheless, the heatmap shows that the performance degrades with low $\rho$, and the impact of poisoning structure is real: The case with $25\%$ of poisoning in steps 33, 34 and 49 has a strictly higher loss than the other two cases with the same poisoning location.

The empirical results strengthens our theoretical discoveries on the relationship between the CoT performance and data shifts. Meticulous assessment of the data shifts is essential to ensure the success of the CoT training with decomposition in Section 3. We have ran the experiments multiple times with stable results, and the details on variability in multiple runs will be presented in the appendix.

\section{Conclusion}

In this work, we provide, to the best of our knowledge, the first theoretical analysis of limitations of CoT applied on a concrete problem. Because the $k$-parity problem is well-defined and based on binary inputs, there are clear measures on success/failure, distribution shift, and poisoning level. We derive a necessary and sufficient condition on the distribution‑shift parameter $\rho$ and structured poisoning levels $q_{a,b,c}$ that rigidly regulates the success of training.

\paragraph{Limitations.} Despite the insights we gained, our study has two major limitations. First, our focus is entirely on the parity problem. In spite of the advantages of the problem, real-world applications of CoT are much more nuanced and may lead to different results. The other constraint is the choice of CoT decomposition. Although this decomposition has been proved to be successful, there might be alternatives which may interact with data shifts differently. Future work on a more diverse set of problems and CoT structures would be inspiring for both theory and applications. 

\bibliographystyle{plainnat}
\bibliography{arxiv_paper_draft}

\newpage

%%%%%%%%%%%%%%%%%%%%%%%%%%%%%%%%%%%%%%%%%%%%%%%%%%%%%%%%%%%%

\appendix

\section*{Technical Appendices and Supplementary Material}
%Technical appendices with additional results, figures, graphs and proofs may be submitted with the paper submission before the full submission deadline (see above), or as a separate PDF in the ZIP file below before the supplementary material deadline. There is no page limit for the technical appendices.

\section{Proof of \Cref{easy-input-output}}

\begin{figure}[h]
    \centering
    \includegraphics[scale=1]{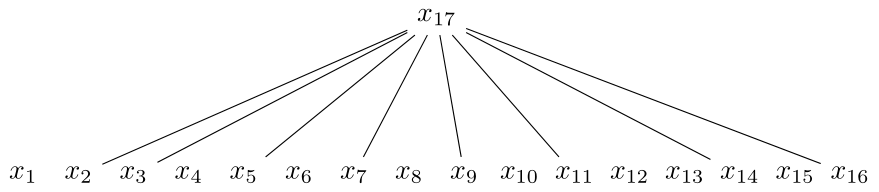}
    \caption{The same parity function where $P = \{ 2,3,5,7,9,11,14,16 \}$ as one represented by \Cref{fig:cot-tree}, but without CoT decomposition.}
    \label{fig:flat-tree}
\end{figure}

By proving \Cref{easy-input-output}, we show that the $k$-parity problem can be solved by simple transformers if the data samples are generated via the distribution $D^P_{\rho}$ for $\rho < 1$.

%\begin{definition}
%    Under distribution $\mathcal{D}$, Samples are uniformly generated with probability $\rho$ instead of 0.5, and with probability $1-\rho$, the relevant bits are either all 1 or -1.
%\end{definition}

First, we visually understand the problem. We illustrated the case with CoT using \Cref{fig:cot-tree}. If we omit CoT and use the simple input-output model, then the tree only has depth one, where the root is the final output and its $k$ children are relevant bits, as illustrated in . Given $n$ samples, the loss function is
\begin{equation}
    L(\mathbf{W}) = \frac{1}{2n} \left \lvert \left \lvert \hat{\bm{x}}_{d+1} - \bm{x}_{d+1} \right \rvert \right \rvert^2_{\infty}.
\end{equation}
Unlike the loss function for the case with CoT, the function here is a single summand because the only prediction during the process is the output. 

Using the softmax attention, some quantities in the original paper preserve. We will need the following expressions. For any $1 \leq \alpha, j < d+1$, denote the $\delta_{j \alpha}$ as the 0-1 indicator on $j = \alpha$, we have
\begin{equation}
    \frac{\partial \sigma_{\alpha}(\bm{w}_{d+1})}{\partial w_{j,d+1}} = (\delta_{j \alpha} - \sigma_{\alpha} (\bm{w}_{d+1})) \sigma_j (\bm{w}_{d+1}) = (\delta_{j \alpha} - \sigma_j (\bm{w}_{d+1})) \sigma_{\alpha} (\bm{w}_{d+1});
\end{equation}
and
\begin{equation}
    \frac{\partial \hat{\bm{z}}_{d+1}}{\partial w_{j,d+1}} = \sum_{\alpha=1}^{d} (\delta_{j \alpha} - \sigma_j (\bm{w}_{d+1})) \sigma_{\alpha} (\bm{w}_{d+1}) \bm{x}_{\alpha} = \sigma_j (\bm{w}_{d+1})(\bm{x}_j - \hat{\bm{z}}_{d+1}).
\end{equation}

\paragraph{Feedforward layer.} Because of the nice properties of the 2-parity problem, we could apply a simple feedforward activation $\phi$ as long as $\phi(0) = -1$ and $\phi(\pm 1) = 1$. However, in this ``flat'' problem, we must choose a feedforward function $\phi$ to satisfy: $\phi \left( \frac{x_1 + \cdots + x_k}{k} \right) = x_1 x_2 \cdots x_k$. This leads to a few other quantitative requirements:
\begin{itemize}[noitemsep, nolistsep, leftmargin=*]
    \item $\phi(0) = 1$. Because if $x_1 + \cdots + x_k = 0$, then we have equally many $1$'s and $-1$'s, so $x_1 \cdots x_k = 1$.
    \item $\phi \left( \pm \frac{2}{k} \right) = \phi \left( \pm \frac{6}{k} \right) = \cdots = \phi \left( \pm \frac{k-2}{k} \right) = -1$. This is the case when the difference between the numbers of $1$'s and $-1$'s is odd, and their product is $-1$.
    \item $\phi \left( \pm \frac{4}{k} \right) = \phi \left( \pm \frac{8}{k} \right) = \cdots = \phi \left( \pm \frac{k}{k} \right) = 1$. This is the case when the difference between the numbers of $1$'s and $-1$'s is even, and their product is $1$.
\end{itemize}
A plausible choice is $\phi(x) = \cos (0.5 k \pi x)$; it satisfies all the properties above. When $x$ is small, using Taylor expansion we can approximate its derivative as $\phi'(x) \approx -0.25 (k^2 \pi^2) x$, and the remaining terms can be approximated as $O(x^3)$. To simplify the notation, we will write $\phi'(x) = -2cx = -2a k^2 x$ where $c = ak^2$ and $a = \pi^2/8$.

Therefore,
\begin{align}
    \frac{\partial L}{\partial w_{j, d+1}} (\mathbf{W}) & = \frac{1}{n} \left( \phi(\hat{\bm{z}}_{d+1}) - \bm{x}_{d+1} \right)^{\top} \frac{\partial \phi(\hat{\bm{z}}_{d+1})}{\partial w_{j, d+1}} = \frac{\sigma_j (\bm{w}_{d+1})}{n} \left \langle \phi(\hat{\bm{z}}_{d+1}) - \bm{x}_{d+1}, \phi' (\hat{\bm{z}}_{d+1}), \bm{x}_j - \hat{\bm{z}}_{d+1} \right \rangle\\
    & = -\frac{1}{nd} \left \langle \bm{x}_{d+1}, -2c \hat{\bm{z}}_{d+1}, \bm{x}_j - \hat{\bm{z}}_{d+1} \right \rangle\\
    & \quad + \frac{1}{nd} \left \langle - \bm{1}_n + c \hat{\bm{z}}_{d+1}^2, 2c \hat{\bm{z}}_{d+1}, \bm{x}_j - \hat{\bm{z}}_{d+1} \right \rangle\\
    & \quad + \frac{1}{nd} \left \langle O(\left \lvert \hat{\bm{z}}_{d+1} \right \rvert^4), -2c \hat{\bm{z}}_{d+1}, \bm{x}_j - \hat{\bm{z}}_{d+1} \right \rangle\\
    & \quad + \frac{1}{nd} \left \langle \phi(\hat{\bm{z}}_{d+1}) - \bm{x}_{d+1}, O(\left \lvert \hat{\bm{z}}_{d+1} \right \rvert^3), \bm{x}_j - \hat{\bm{z}}_{d+1} \right \rangle.
\end{align}

Like the original proof, we will show that the first term is the leading term and the other three vanish eventually. Recall that under the CoT setting, thanks to the binary tree structure, the multi-linear product among a parent and two children is always one.

In our case, however, this no longer holds: Suppose $d=16$, $k=4$, $x_{17}$ is the root, and $x_2, x_3, x_6, x_7$ are relevant bits. Observe that the product $\langle x_{17}, x_2, x_7 \rangle = x_2 x_3 x_6 x_7 x_2 x_7 = x_3 x_6$ is a random variable instead of a fixed value.

Nevertheless, we will see the random variables are largely homogeneous. We first express the leading term in a more readable way by substituting $\hat{\bm{z}}_{d+1} = \frac{1}{d} \sum_{\alpha} \bm{x}_{\alpha}$. Observe that,
\begin{equation} \label{leading-term-sub}
    \frac{1}{n} \left \langle \bm{x}_{d+1}, \hat{\bm{z}}_{d+1}, \bm{x}_j - \hat{\bm{z}}_{d+1} \right \rangle = \frac{1}{nd} \sum_{\alpha} \langle \bm{x}_{d+1}, \bm{x}_{\alpha}, \bm{x}_j \rangle - \frac{1}{nd^2} \sum_{\alpha, \beta} \langle \bm{x}_{d+1}, \bm{x}_{\alpha}, \bm{x}_{\beta} \rangle,
\end{equation}
where the dummy indices $\alpha$ and $\beta$ are taken to run over $[d]$. Before continuing, we prove a lemma that provides a concentration bound for the interactions between the bits.

\begin{lemma} \label{lemma-vanishing-terms}
    Using the distribution $\mathcal{D}$, and let $r \in [4]$, then for each of the following two cases:
    \begin{enumerate}
        \item $r$ is odd
        \item $r$ is even but at least one of $j_1, \ldots, j_r$ is an irrelevant bit.
    \end{enumerate}
    Then for any $p>0$, it holds with probability at least $1-p$ that
    \begin{equation}
        \max_{r \in [4], \{ j_1, \ldots, j_r \nsubseteq P \}} \frac{\lvert \langle \bm{x}_{j_1}, \ldots, \bm{x}_{j_r} \rangle \rvert}{n} \leq \kappa := \sqrt{\frac{2}{n} \log \frac{4d^4}{p}}.
    \end{equation}
    Otherwise, if the indices of $r$ bits are not any one of the two case above, then similarly, for any $p>0$, it holds with probability as least $1-p$ that
    \begin{equation}
        \max_{r \in [4], \{ j_1, \ldots, j_r \nsubseteq P \}} \frac{\lvert \langle \bm{x}_{j_1}, \ldots, \bm{x}_{j_r} \rangle - (1-\rho) \rvert}{n} \leq \kappa := \sqrt{\frac{2}{n} \log \frac{4d^4}{p}}.
    \end{equation}
\end{lemma}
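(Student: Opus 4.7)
The plan is to reduce the claim to a standard union-bounded Hoeffding inequality; the only substantive step is computing the expectation of the single-sample monomial $M := x_{j_1,i}\cdots x_{j_r,i} \in \{\pm 1\}$ under $\mathcal{D}^P_\rho$, after which the concentration is immediate.

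First I would carry out the case analysis on $\mathbb{E}[M]$ by conditioning on which mixture component of $\mathcal{D}^P_\rho$ is sampled. Under the uniform component (probability $\rho$), every bit is an independent uniform $\pm 1$, so $\mathbb{E}[M \mid \text{uniform}] = 0$ for any $r \geq 1$. Under the imbalanced component (probability $1-\rho$), the bits in $P$ all equal a common uniform sign $s \in \{\pm 1\}$ while bits outside $P$ are independent uniform; hence $\mathbb{E}[M \mid \text{imbalanced}] = 0$ whenever some $j_\ell \notin P$, and $\mathbb{E}[M \mid \text{imbalanced}] = \mathbb{E}[s^r]$ when all $j_\ell \in P$, which equals $+1$ for even $r$ and $0$ for odd $r$. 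Combining yields $\mathbb{E}[M] = 1-\rho$ in the single ``leaking'' case (all $j_\ell \in P$ and $r$ even) and $\mathbb{E}[M] = 0$ in every other case, matching the two regimes stated.

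Next, since the $n$ training samples are i.i.d., the summands in $\langle \bm{x}_{j_1}, \ldots, \bm{x}_{j_r} \rangle = \sum_{i=1}^n M_i$ are i.i.d.\ and bounded in $[-1,1]$. Hoeffding's inequality gives, for each fixed tuple,
\begin{equation*}
    \Pr\!\left[\left| \tfrac{1}{n} \textstyle\sum_{i=1}^n M_i - \mathbb{E}[M_1] \right| \geq \kappa\right] \leq 2\exp\!\left(-\tfrac{n\kappa^2}{2}\right).
\end{equation*}
A union bound over all tuples $(r; j_1, \ldots, j_r)$ with $r \in [4]$ and $j_\ell \in [d]$ introduces a counting factor of at most $d + d^2 + d^3 + d^4 \leq 2d^4$ for $d \geq 2$, so the prescribed $\kappa = \sqrt{(2/n)\log(4d^4/p)}$ drives the overall failure probability below $p$ (the constant $4$ in the statement absorbs the remaining slack). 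The main ``obstacle'' is really just bookkeeping on the case split of $\mathbb{E}[M]$: the two regimes in the lemma correspond exactly to centered-at-$0$ versus centered-at-$(1-\rho)$, so a single uniform $\kappa$ controls both after applying Hoeffding to the appropriately centered sum.
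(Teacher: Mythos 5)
Your proof is correct and follows essentially the same path as the paper's: compute the per-sample mean $\mathbb{E}[M]$ by conditioning on the mixture component of $\mathcal{D}^P_\rho$ (which cleanly isolates the single nonzero-mean case, all $j_\ell \in P$ with $r$ even, giving $1-\rho$), then apply Hoeffding and a union bound over tuples. The only cosmetic difference is that you use the coarse count $\sum_{r=1}^4 d^r \leq 2d^4$ where the paper enumerates tuple counts per value of $r$; both yield the same $\kappa$.
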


\begin{proof}
    We prove the cases when $r$ is odd or $r$ is even but at least one of $j_1, \ldots, j_r$ is an irrelevant bit. The proof for the remaining scenario with non-zero mean is the same. Observe that, for each sample, the multi-linear product $\langle x_{j_1}, \ldots, x_{j_r} \rangle = x_{j_1} \cdots x_{j_r}$ is a random variable. Suppose $r$ is odd, i.e. $r = 1$ or $3$ and the bits are all relevant. Let $j \in P$, then $\mathbb{P}_{\mathcal{D}} (x_j = 1) = \rho \times 0.5 + (1-\rho) \times 0.5 = 0.5 = \mathbb{P}_{\mathcal{D}} (x_j = -1)$, so $\mathbb{E}_{\mathcal{D}} [x_j] = 0$. Let $j_1, j_2, j_3 \in P$, we have
    \begin{equation}
        \mathbb{P}_{\mathcal{D}} (x_{j_1} x_{j_2} x_{j_3} = 1) = \rho \times \frac{\binom{3}{1} + \binom{3}{3}}{2^3} + (1-\rho) \times \frac{1}{2} = 0.5 = \mathbb{P}_{\mathcal{D}} (x_{j_1} x_{j_2} x_{j_3} = -1).
    \end{equation}
    Therefore, $\mathbb{E}_{\mathcal{D}} [x_{j_1} x_{j_2} x_{j_3}] = 0$.
    
    Next, suppose $j_1 \notin P$, then clearly $\mathbb{P}_{\mathcal{D}} [x_{j_1} = 1] = \mathbb{P}_{\mathcal{D}} [x_{j_1} = -1] = \rho \times 0.5+ (1-\rho) \times 0.5 = 0.5$, so $\mathbb{E}_{\mathcal{D}} [x_{j_1}] = \mathbb{E}_{\text{Uniform}} [x_{j_1}] = 0$. Non-relevant bits are independent with respect to every other bit, so $\mathbb{E}_{\mathcal{D}} \left[ \langle x_{j_1}, \ldots, x_{j_r} \rangle \right] = \mathbb{E}_{\text{Uniform}} \left[ \langle x_{j_1}, \ldots, x_{j_r} \rangle \right] = 0$. 
    
    Since all the variables discussed above have zero mean, by Hoeffding's inequality, we have
    \begin{equation}
        \mathbb{P}_{\mathcal{D}} \left( \lvert \langle x_{j_1}, \ldots, x_{j_r} \rangle \rvert \geq \lambda \right) \leq 2e^{-\lambda^2/2n}.
    \end{equation}
    If $r=1$, there are exactly $d-k$ such variables; if $r=2$, there are $d(k-1)$; if $r=3$, there are $d(k-1)(k-2)$; if $r=4$, there are $d(k-1)(k-2)(k-3)$. Their sum is below $2d^4$. So, by union bounding, we have
    \begin{equation}
        \mathbb{P} \left( \max_{r \in [4], \{ j_1, \ldots, j_r \nsubseteq P \}} \left \lvert \langle \bm{x}_{j_1}, \ldots, \bm{x}_{j_r} \rangle \right \rvert \geq \lambda \right) \leq 4d^4 e^{-\lambda^2/2n}.
    \end{equation}
    The lemma statement follows by substituting $\lambda = \sqrt{\frac{2}{n} \log \frac{4d^4}{p}}$.
\end{proof}

If we take $n = \Omega(d^{2+\epsilon})$ and $p = \exp \left( -d^{\epsilon/2} \right)$, so $\kappa = O \left( d^{-1-\epsilon/4} \right)$.

\subsection{Term (15)}

We now proceed to analyze the quantity of the leading term when $j \in P$, i.e. $j$ is a relevant bit. Thus, we can decompose the two terms in \Cref{leading-term-sub} as:
\begin{align}
    \frac{1}{nd} \sum_{\alpha} \langle \bm{x}_{d+1}, \bm{x}_{\alpha}, \bm{x}_j \rangle & = \frac{1}{nd} \sum_{\alpha \in P} \langle \bm{x}_{d+1}, \bm{x}_{\alpha}, \bm{x}_j \rangle + \frac{1}{nd} \sum_{\alpha \notin P} \langle \bm{x}_{d+1}, \bm{x}_{\alpha}, \bm{x}_j \rangle\\
    & = \frac{1}{nd} \sum_{\alpha \in P} \langle \bm{x}_{d+1}, \bm{x}_{\alpha}, \bm{x}_j \rangle + \frac{1}{d} \cdot O((d-k) \kappa)\\
    & = \frac{1}{d} \cdot X_k + \frac{k-1}{d} \cdot X_{k-2} + \frac{d-k}{d} O(\kappa).
\end{align}
Similarly,
\begin{align}
    \frac{1}{nd^2} \sum_{\alpha} \langle \bm{x}_{d+1}, \bm{x}_{\alpha}, \bm{x}_{\beta} \rangle & = \frac{1}{nd^2} \sum_{\alpha, \beta \in P} \langle \bm{x}_{d+1}, \bm{x}_{\alpha}, \bm{x}_{\beta} \rangle + \frac{1}{nd^2} \sum_{\text{rest}} \langle \bm{x}_{d+1}, \bm{x}_{\alpha'}, \bm{x}_{\beta'} \rangle\\
    & = \frac{1}{nd^2} \sum_{\alpha, \beta \in P} \langle \bm{x}_{d+1}, \bm{x}_{\alpha}, \bm{x}_{\beta} \rangle + O \left( \frac{d^2 - k^2}{d^2} \cdot \kappa \right)\\
    & = \frac{k}{d^2} \cdot X_k + \frac{k(k-1)}{d^2} \cdot X_{k-2} + \frac{d^2-k^2}{d^2} O(\kappa).
\end{align}
Adding them up, we have that, if $j \in P$, then
\begin{equation}
    \frac{1}{n} \left \langle \bm{x}_{d+1}, \hat{\bm{z}}_{d+1}, \bm{x}_j - \hat{\bm{z}}_{d+1} \right \rangle = \frac{d-k}{d^2} X_k + \frac{(d-k)(k-1)}{d^2} X_{k-2} - \frac{k(d-k)}{d^2} O(\kappa).
\end{equation}
Multiplying $\frac{2c}{d} = \frac{2ak^2}{d}$, we finally have
\begin{equation}
    \frac{2c}{nd} \left \langle \bm{x}_{d+1}, \hat{\bm{z}}_{d+1}, \bm{x}_j - \hat{\bm{z}}_{d+1} \right \rangle = \frac{2ak^2(d-k)}{d^3} X_k + \frac{2ak^2(d-k)(k-1)}{d^3} X_{k-2} - \frac{2ak^3(d-k)}{d^3} O(\kappa).
\end{equation}

On the other hand, if $j \notin P$, then we have
\begin{align}
    \frac{1}{n} \langle \bm{x}_{d+1}, \hat{\bm{z}}_m, \bm{x}_j \rangle & = \frac{1}{nd} \sum_{\alpha} \langle \bm{x}_{d+1}, \bm{x}_{\alpha}, \bm{x}_j \rangle - \frac{1}{nd^2} \sum_{\alpha, \beta} \langle \bm{x}_{d+1}, \bm{x}_{\alpha}, \bm{x}_{\beta} \rangle\\
    & = \frac{1}{d} X_k + \frac{d-1}{d} O(\kappa) - \left( \frac{k}{d^2} X_k + \frac{k(k-1)}{d^2} X_{k-2} + \frac{d^2-k^2}{d^2} O(\kappa) \right)\\
    & = \frac{d-k}{d^2} X_k - \frac{k(k-1)}{d^2} X_{k-2} + \frac{k^2-d}{d^2} O(\kappa).
\end{align}
Again, multiplying $\frac{2c}{d} = \frac{2ak^2}{d}$, we have
\begin{equation}
    \frac{2c}{nd} \langle \bm{x}_{d+1}, \hat{\bm{z}}_m, \bm{x}_j \rangle = \frac{2ak^2(d-k)}{d^3} X_k - \frac{2ak^3(k-1)}{d^3} X_{k-2} + \frac{2ak^2(k^2-d)}{d^3} O(\kappa).
\end{equation}

Therefore, we have
\begin{align}
     \text{Term (15)} =
     \begin{cases}
        \frac{2ak^2(d-k)}{d^3} X_k + \frac{2ak^2(d-k)(k-1)}{d^3} X_{k-2} - \frac{2ak^3(d-k)}{d^3} O(\kappa), \quad j \in P\\
        \frac{2ak^2(d-k)}{d^3} X_k - \frac{2ak^3(k-1)}{d^3} X_{k-2} + \frac{2ak^2(k^2-d)}{d^3} O(\kappa), \quad j \notin P
     \end{cases}.
\end{align}

%We will precisely control the quantities of the terms $\sum_{\alpha \in P} \langle \bm{x}_{d+1}, \bm{x}_{\alpha}, \bm{x}_j \rangle$ and $\sum_{\alpha, \beta \in P} \langle \bm{x}_{d+1}, \bm{x}_{\alpha}, \bm{x}_{\beta} \rangle$ later. Before that, we first analyze the other three terms in $\partial L$.

\subsection{Term (16)}

We expand term (16) as the following:
\begin{equation}
    \frac{1}{nd} \left \langle - \bm{1}_n + c \hat{\bm{z}}_{d+1}^2, 2c \hat{\bm{z}}_{d+1}, \bm{x}_j - \hat{\bm{z}}_{d+1} \right \rangle = -\frac{2c}{nd} \left \langle \hat{\bm{z}}_{d+1}, \bm{x}_j \right \rangle + \frac{2c}{nd} \left \langle \hat{\bm{z}}_{d+1}^2 \right \rangle + \frac{2c^2}{nd} \left \langle \hat{\bm{z}}_{d+1}^3, \bm{x}_j \right \rangle - \frac{2c^2}{nd} \left \langle \hat{\bm{z}}_{d+1}^4 \right \rangle.
\end{equation}

\subsubsection{First term}

\begin{align}
    \frac{1}{n} \left \langle \hat{\bm{z}}_{d+1}, \bm{x}_j \right \rangle & = \frac{1}{nd} \left( \langle \bm{x}_j, \bm{x}_j \rangle + \sum_{\alpha \neq j} \langle \bm{x}_{\alpha}, \bm{x}_j \rangle \right) = \frac{1}{d} + \frac{1}{nd} \sum_{\alpha \neq j} \langle \bm{x}_{\alpha}, \bm{x}_j \rangle\\
    & =
    \begin{cases}
        1/d + \frac{1}{nd} \sum_{\alpha \in P \setminus \{j \}} \langle \bm{x}_{\alpha}, \bm{x}_j \rangle, \quad j \in P\\
        1/d + \frac{d-1}{d} O(\kappa), \quad j \notin P
    \end{cases}\\
    & =
    \begin{cases}
        1/d + \frac{k-1}{d} X_2 + \frac{d-k}{d} O(\kappa), \quad j \in P\\
        1/d + \frac{d-1}{d} O(\kappa), \quad j \notin P
    \end{cases}
    %& = 
    %\begin{cases}
    %    1/d + O(1) \cdot X_2 + O(\kappa), \quad j \in P\\
    %    1/d + O(\kappa), \quad j \notin P
    %\end{cases}
\end{align}

Multiplying the constant $-2c$, we have
\begin{align}
    -\frac{2c}{nd} \left \langle \hat{\bm{z}}_{d+1}, \bm{x}_j \right \rangle & =
    \begin{cases}
        -\frac{2c}{d^2} - \frac{2c(k-1)}{d^2} X_2 - \frac{2c(d-k)}{d^2} O(\kappa), \quad j \in P\\
        -\frac{2c}{d^2} - \frac{2c(d-1)}{d^2} O(\kappa), \quad j \notin P
    \end{cases}\\
    & = \begin{cases}
        -\frac{2ak^2}{d^2} - \frac{2ak^2(k-1)}{d^2} X_2 - \frac{2ak^2(d-k)}{d^2} O(\kappa), \quad j \in P\\
        -\frac{2ak^2}{d^2} - \frac{2ak^2(d-1)}{d^2} O(\kappa), \quad j \notin P
    \end{cases}
\end{align}

\subsubsection{Second term}
Next,
\begin{align}
    \frac{1}{n} \left \langle \hat{\bm{z}}_{d+1}^2 \right \rangle & = \frac{1}{nd^2} \left( \sum_{\alpha} \langle \bm{x}_{\alpha}, \bm{x}_{\alpha} \rangle + \sum_{\alpha \neq \beta} \langle \bm{x}_{\alpha}, \bm{x}_{\beta} \rangle \right)\\
    & = \frac{1}{d} + \frac{1}{nd^2} \sum_{\alpha \neq \beta; \alpha, \beta \in P} \langle \bm{x}_{\alpha}, \bm{x}_{\beta} \rangle + \frac{1}{nd^2} \sum_{\text{rest}} \langle \bm{x}_{\alpha}, \bm{x}_{\beta} \rangle\\
    & = \frac{1}{d} + \frac{k(k-1)}{d^2} X_2 + \frac{(d+k-1)(d-k)}{d^2} O(\kappa)
    %& = \frac{1}{d} + O(1) \cdot X_2 + O(\kappa).
\end{align}
Therefore,
\begin{align}
    \frac{2c}{nd} \left \langle \hat{\bm{z}}_{d+1}^2 \right \rangle & = \frac{2c}{d^2} + \frac{2ck(k-1)}{d^3} X_2 + \frac{2c(d+k-1)(d-k)}{d^3} O(\kappa)\\
    & = \frac{2ak^2}{d^2} + \frac{2ak^3(k-1)}{d^3} X_2 + \frac{2ak^2(d+k-1)(d-k)}{d^3} O(\kappa).
\end{align}

\subsubsection{Fourth term}

For the fourth order term, we have
\begin{equation}
    \frac{1}{n} \left \langle \hat{\bm{z}}_{d+1}^4 \right \rangle = \frac{1}{nd^4} \sum_{\alpha, \beta, \gamma, \delta} \langle \bm{x}_{\alpha}, \bm{x}_{\beta}, \bm{x}_{\gamma}, \bm{x}_{\delta} \rangle
\end{equation}

We analyze all possible combinations of the indices by enumerating the size of the set $\{ \alpha, \beta, \gamma, \delta \}$.
\begin{enumerate}[noitemsep, nolistsep, leftmargin=*]
    \item $\lvert \{ \alpha, \beta, \gamma, \delta \} \rvert = 1$: There are $d$ instances of $\langle \bm{x}_{\alpha}, \bm{x}_{\alpha}, \bm{x}_{\alpha}, \bm{x}_{\alpha} \rangle$.
    \item $\lvert \{ \alpha, \beta, \gamma, \delta \} \rvert = 2$: There are $\binom{d}{2}$ pairs of distinct indices. For each pair $(\alpha, \beta)$, either:
    \begin{enumerate}[noitemsep, nolistsep, leftmargin=*]
        \item One appears three times. There are eight possibilities: We first choose the position of the unique index (four), and both can be that unique index (multiplying two).
        \item Both appear twice. Clearly, there are six possibilities.
    \end{enumerate}
    Therefore, there are totally $14 \binom{d}{2} = 7d(d-1)$ possible combinations in this case.
    \item $\lvert \{ \alpha, \beta, \gamma, \delta \} \rvert = 3$: Given a triple $(\alpha, \beta, \gamma)$, exactly one index must appear twice, so there are three possibilities for this criteria. Once this choice is made, the identical indices may choose one of the six pairs among the four positions. Once two spots are occupied, the remaining two indices may fill in either order. So, there are totally $36 \binom{d}{3} = 6d(d-1)(d-2)$ possible combinations in this case.
    \item $\lvert \{ \alpha, \beta, \gamma, \delta \} \rvert = 4$: Clearly, there are $\binom{d}{4}$ groups of all distinct indices. For each group, they may be placed in $4! = 24$ possible orders. So there are totally $24 \binom{d}{4} = d(d-1)(d-2)(d-3)$ possible combinations in this case.
\end{enumerate}

Every instance in Cases 1 and 2.b has value exactly $n$, and there are $d + 6 \binom{d}{2} = 3d^2-2d$ such instances. Every instances in Cases 2.a and 3 is a sum of $n$ independent (data samples) random variables, and each of them is in the form $\langle x_{\alpha}, x_{\beta} \rangle$ such that $\alpha \neq \beta$. Every instance in Case 4 is a sum of $n$ independent random variables in the form $\langle x_{\alpha}, x_{\beta}, x_{\gamma}, x_{\delta} \rangle$ such that $\lvert \{ \alpha, \beta, \gamma, \delta \} \rvert = 4$. So we further divide those $d^4$ instance into three groups:
\begin{itemize}[noitemsep, nolistsep, leftmargin=*]
    \item Cases 1 and 2.b.
    \item Cases 2.a and 3. Among $6d^3-14d^2+8d$ such instances, exactly $8 \binom{k}{2} + 36 \binom{k}{3} = 6k^3 - 14k^2 + 8k$ correspond to random variables as multiplication of two bits in $P$. The rest of them have value $O(\kappa)$.
    \item Case 4. Clearly $24 \binom{k}{4} = k(k-1)(k-2)(k-3)$ instances correspond to random variables as multiplication of four bits in $P$. The rest of them have value $O(\kappa)$.
\end{itemize}
The first group with $3d^2-2d$ instances accumulates to
\begin{equation}
    \frac{1}{nd^4} \times (\text{Cases 1 and 2.b}) = \frac{1}{nd^4} \times n(3d^2-2d) = \frac{3d-2}{d^3}.
\end{equation}
The second group leads to
\begin{align}
    \frac{1}{nd^4} \times (\text{Cases 2.a and 3}) = \frac{6k^3-14k^2+8k}{d^4} \cdot X_2 + \frac{6(d^3-k^3)-14(d^2-k^2)+8(d-k)}{d^4} O(\kappa).
    %& = \frac{6k^3-14k^2+8k}{d^4} \cdot X_2 + O( \kappa/d).
\end{align}
The third group leads to
\begin{align}
    \frac{1}{nd^4} \times (\text{Case 4}) = \frac{k(k-1)(k-2)(k-3)}{d^4} \cdot X_4 + \frac{d(d-1)(d-2)(d-3)-k(k-1)(k-2)(k-3)}{d^4} O(\kappa).
    %& = O(1) \cdot X_4 + O(\kappa).
\end{align}
Adding everything up, we have
\begin{align}
    \frac{1}{n} \left \langle \hat{\bm{z}}_{d+1}^4 \right \rangle & = \frac{3d-2}{d^3} + \frac{6k^3-14k^2+8k}{d^4} \cdot X_2 + \frac{k(k-1)(k-2)(k-3)}{d^4} \cdot X_4\\
    & \quad + \frac{(d^4-k^4)-3(d^2-k^2)+2(d-k)}{d^4} O(\kappa).
    %& \quad + \frac{d(d-1)(d-2)(d-3)-k(k-1)(k-2)(k-3) + 6(d^3-k^3)-14(d^2-k^2)+8(d-k)}{d^4} O(\kappa).
\end{align}
Multiplying $-2c^2 = -2 a^2 k^4$, we have
\begin{align}
    -\frac{2c^2}{nd} \left \langle \hat{\bm{z}}_{d+1}^4 \right \rangle & = -\frac{2a^2 k^4(3d^2-2d)}{d^5} - \frac{2a^2 k^4(6k^3-14k^2+8k)}{d^5} \cdot X_2 - \frac{2a^2 k^5(k-1)(k-2)(k-3)}{d^5} \cdot X_4\\
    & \quad - \frac{2a^2 k^4 \left[(d^4-k^4)-3(d^2-k^2)+2(d-k) \right]}{d^5} \cdot O(\kappa).
\end{align}

\subsubsection{Third term}
Finally, for the three order term, where the group $\{ \alpha, \beta, \gamma, \delta \}$ must contain a fixed $j \in [d]$. We have,
\begin{equation}
    \frac{1}{n} \left \langle \hat{\bm{z}}_{d+1}^3, \bm{x}_j \right \rangle = \frac{1}{nd^3} \sum_{\alpha, \beta, \gamma} \langle \bm{x}_{\alpha}, \bm{x}_{\beta}, \bm{x}_{\gamma}, \bm{x}_j \rangle
\end{equation}
Like the previous four order term, we need to discuss multiple cases.
\begin{enumerate}[noitemsep, nolistsep, leftmargin=*]
    \item $\alpha = \beta = \gamma = j$. We easily obtain $\frac{1}{nd^3} \langle \bm{x}_j, \bm{x}_j, \bm{x}_j, \bm{x}_j \rangle = 1/d^3$.
    \item $\alpha = \beta = \gamma \neq j$. There are $d-1$ possible combinations.
    \item $\lvert \{ \alpha, \beta, \gamma \} \rvert = \lvert \{ \alpha, \beta, \gamma, j \} \rvert = 2$. This further divides into two sub-cases.
    \begin{enumerate}[noitemsep, nolistsep, leftmargin=*]
        \item Only one of them is $j$. We have $d-1$ choices for $\alpha \neq j$, and three choices of spot for that unique $j$. Totally, there are $3(d-1)$ combinations.
        \item Two of them are $j$. Like the sub-case above, there are $3(d-1)$ possible combinations.
    \end{enumerate}
    \item $\lvert \{ \alpha, \beta, \gamma \} \rvert = 2$, but $\lvert \{ \alpha, \beta, \gamma, j \} \rvert = 3$. None of the three indices can be $j$, so there are $\binom{d-1}{2}$ pairs of $(\alpha, \beta)$. For each pair, we need to choose which index appears once between two elements. In total, there are $6 \binom{d-1}{2} = 3(d-1)(d-2)$ combinations.
    \item $\lvert \{ \alpha, \beta, \gamma \} \rvert = \lvert \{ \alpha, \beta, \gamma, j \} \rvert = 3$. Exactly one of the three indices must be $j$, so there are $\binom{d-1}{2}$ pairs of unequal indices. There are three choices for $j$'s position, and every time we can flip the pair. Therefore, there are $6 \binom{d-1}{2} = 3(d-1)(d-2)$ combinations in this case.
    \item $\lvert \{ \alpha, \beta, \gamma \} \rvert = 3$, and $\lvert \{ \alpha, \beta, \gamma, j \} \rvert = 4$. There are $\binom{d-1}{3}$ triples $(\alpha, \beta, \gamma)$ and six combinations for each choice, so there are $6 \binom{d-1}{3} = (d-1)(d-2)(d-3)$ combinations in total.
\end{enumerate}

The unique case in Case 1 is trivial.

We first assume $j \notin P$. Then \Cref{lemma-vanishing-terms} directly applies to all summands and therefore the term sums to the following with high probability:
\begin{equation}
    \frac{2c^2}{nd} \left \langle \hat{\bm{z}}_{d+1}^3, \bm{x}_j \right \rangle = \frac{2c^2}{nd^4} \sum_{\alpha, \beta, \gamma} \langle \bm{x}_{\alpha}, \bm{x}_{\beta}, \bm{x}_{\gamma}, \bm{x}_j \rangle = \frac{2c^2 \cdot \langle \bm{x}_{\alpha}, \bm{x}_{\beta}, \bm{x}_{\gamma}, \bm{x}_j \rangle}{nd^4} \times d^3 = \frac{2c^2}{d} \cdot O(\kappa) = \frac{2a^2 k^4}{d} O(\kappa).
\end{equation}

Now suppose $j \in P$. Exactly $k-1$ combinations in Case 2 correspond to a sum of $n$ independent $X_2$, and the remaining $d-k$ combinations are $O(\kappa)$. All instances in Case 3.a have product value $n$, and all instances in Case 3.b correspond to $n$ copies of $X_2$. For Case 4, only $j$ and the unique index matter. Exactly $2(k-1)(d-2)$ combinations correspond to $n$ copies of $X_2$, and the rest are $O(\kappa)$. For Case 5, the $x_j$ cancels out, so the sum for Case 5 can be expressed as
\begin{equation}
    \sum_{\alpha \neq \beta; \alpha, \beta \in [d] \setminus \{j \}} \langle \bm{x}_{\alpha}, \bm{x}_{\beta} \rangle.
\end{equation}
Clearly, only $3(k-1)(k-2)$ summands correspond to $n$ copies of $X_2$, and others are $O(\kappa)$. Finally, for Case 6, there are $6 \binom{k-1}{3}$ combinations that correspond to $n$ copies of $X_4$, and all others are $O(\kappa)$. We can then express the three order term as the following.
\begin{align}
    \frac{1}{n} \left \langle \hat{\bm{z}}_{d+1}^3, \bm{x}_j \right \rangle & = \frac{1}{nd^3} \sum_{\alpha, \beta, \gamma} \langle \bm{x}_{\alpha}, \bm{x}_{\beta}, \bm{x}_{\gamma}, \bm{x}_j \rangle\\
    & = \frac{3d-2}{d^3} + \frac{(k-1)+3(d-1)+2(k-1)(d-2)+3(k-1)(k-2)}{d^3} \cdot X_2\\
    & \quad + \frac{(k-1)(k-2)(k-3)}{d^3} \cdot X_4\\
    & \quad + \frac{(d^3-k^3)-2dk+3k^2-4d+k+2}{d^3} \cdot O(\kappa).
\end{align}
%We can further simplify the expression as
%\begin{equation}
%    \frac{1}{n} \left \langle \hat{\bm{z}}_{d+1}^3, \bm{x}_j \right \rangle = \frac{1}{nd^3} \sum_{\alpha, \beta, \gamma} \langle \bm{x}_{\alpha}, \bm{x}_{\beta}, \bm{x}_{\gamma}, \bm{x}_j \rangle = O(1/d^2) + O(1/d) \cdot X_2 + O(1) \cdot X_4 + O(\kappa).
%\end{equation}    
%\begin{equation}
%    \frac{1}{n} \left \langle \hat{\bm{z}}_{d+1}^3, \bm{x}_j \right \rangle = 
%    \begin{cases}
%        O(1/d^2) + O(1/d) \cdot X_2 + O(1) \cdot X_4 + O(\kappa), \quad j \in P\\
%        O(\kappa), \quad j \notin P
%    \end{cases}.
%\end{equation}
Multiplying $2c^2 = 2 a^2 k^4$, we have
\begin{align}
    \frac{2c^2}{nd} \left \langle \hat{\bm{z}}_{d+1}^3, \bm{x}_j \right \rangle & = \frac{1}{nd^4} \sum_{\alpha, \beta, \gamma} \langle \bm{x}_{\alpha}, \bm{x}_{\beta}, \bm{x}_{\gamma}, \bm{x}_j \rangle\\
    & = \frac{2 a^2 k^4 (3d-2)}{d^4} + \frac{2 a^2 k^4 \left[ (k-1)+3(d-1)+2(k-1)(d-2)+3(k-1)(k-2) \right]}{d^4} \cdot X_2\\
    & \quad + \frac{2 a^2 k^4 (k-1)(k-2)(k-3)}{d^4} \cdot X_4\\
    & \quad + \frac{2 a^2 k^4 \left[ (d^3-k^3)-2dk+3k^2-4d+k+2 \right]}{d^4} \cdot O(\kappa).
\end{align}

\subsubsection{Adding everything up}

Adding everything up, we have
\begin{align}
    \text{Term (16)} = -\frac{2c}{nd} \left \langle \hat{\bm{z}}_{d+1}, \bm{x}_j \right \rangle + \frac{2c}{nd} \left \langle \hat{\bm{z}}_{d+1}^2 \right \rangle + \frac{2c^2}{nd} \left \langle \hat{\bm{z}}_{d+1}^3, \bm{x}_j \right \rangle - \frac{2c^2}{nd} \left \langle \hat{\bm{z}}_{d+1}^4 \right \rangle.
\end{align}
Specifically,
\begin{itemize}[noitemsep, nolistsep, leftmargin=*]
    \item If $j \in P$, then
    \begin{itemize}[noitemsep, nolistsep, leftmargin=*]
        \item The coefficient for constant is 0.
        \item The coefficient for $X_2$ is
        \begin{align}
            %\frac{(4a^2 \cdot d^2 k^5 + 6a^2 \cdot d k^6 - 12 a^2 \cdot k^7) + (-2a \cdot d^3 k^3 + 2a^2 \cdot d^2 k^4 - 24a^2 \cdot d k^5 + 28a^2 \cdot k^6) + (2a \cdot d^3 k^2 - 2a \cdot d^2 k^3 + 12a^2 \cdot d k^4 - 16a^2 k^5)}{d^5} = O(d^2).
            & \frac{(4a^2 \cdot d^2 k^5 + 6a^2 \cdot d k^6 - 12 a^2 \cdot k^7) + (-2a \cdot d^3 k^3 + 2a^2 \cdot d^2 k^4 - 24a^2 \cdot d k^5 + 28a^2 \cdot k^6)}{d^5}\\
            & + \frac{(2a \cdot d^3 k^2 - 2a \cdot d^2 k^3 + 12a^2 \cdot d k^4 - 16a^2 k^5)}{d^5}= O(d^2).
        \end{align}
        \item The coefficient for $X_4$ is
        \begin{equation}
            \frac{2a k^4 (d-k)(k-1)(k-2)(k-3)}{d^5} \geq 0; \quad \Rightarrow O(d^3).
        \end{equation}
        \item The coefficient for $O(\kappa)$ is
        \begin{align}
            & \frac{-4 a^2 d^2 k^5 - 2 a^2 d^2 k^4 - 2 a^2 d k^7 + 6 a^2 d k^6 + 2 a^2 d k^5 + 2 a^2 k^8 - 6 a^2 k^6 + 4 a^2 k^5}{d^5}\\
            & + \frac{2 a d^3 k^3 - 2 a d^3 k^2 - 2 a d^2 k^3 + 2 a d^2 k^2}{d^5} = -O(d^2).
        \end{align}
        Hence, this term becomes $-O(d^{1-\epsilon/4})$.
    \end{itemize}
    \item If $j \notin P$, then
    \begin{itemize}[noitemsep, nolistsep, leftmargin=*]
        \item The coefficient for constant is
        \begin{equation}
            -\frac{2a^2 k^4 (3d-2)}{d^4} = -O(d).
        \end{equation}
        \item The coefficient for $X_2$ is
        \begin{equation}
            \frac{-12 a^2 \cdot k^7 + 2a \cdot d^2 k^4 + 28a^2 \cdot k^6 - 2a \cdot d^2 k^3 - 16 a^2 \cdot k^5}{d^5} = -O(d^2).
        \end{equation}
        \item The coefficient for $X_4$ is
        \begin{equation}
            -\frac{2a^2 k^5(k-1)(k-2)(k-3)}{d^5} = -O(d^3).
        \end{equation}
        \item The coefficient for $O(\kappa)$ is
        \begin{equation}
            \frac{6 a^2 d^2 k^4 - 4 a^2 d k^4 + 2 a^2 k^7 - 6 a^2 k^6 + 4 a^2 k^5 - 2 a d^2 k^4 + 2 a d^2 k^3}{d^5} = O(d^2).
        \end{equation}
        Hence, this term becomes $O(d^{1-\epsilon/4})$.
    \end{itemize}
\end{itemize}

\subsection{Terms (17) and (18)}

Now we analyze term (17). Recall that $\left \langle \left \lvert \hat{\bm{z}}_{d+1} \right \rvert^4 \right \rangle = \left \langle \hat{\bm{z}}_{d+1}^4 \right \rangle = O(n/d^2) + O(n/d) \cdot X_2 + O(n) \cdot X_4 + O(n \kappa)$. Observe that each component of $\hat{\bm{z}}_{d+1}$ and $\bm{x}_j - \hat{\bm{z}}_{d+1}$ are contained in $[-1, 1]$ and $[-2, 2]$ respectively, we have
\begin{align}
    & \frac{1}{nd} \left \langle O(\left \lvert \hat{\bm{z}}_{d+1} \right \rvert^4), -2c \hat{\bm{z}}_{d+1}, \bm{x}_j - \hat{\bm{z}}_{d+1} \right \rangle = -\frac{4c}{nd} \cdot O \left( \left \langle \left \lvert \hat{\bm{z}}_{d+1} \right \rvert^4 \right \rangle \right)\\
    & = -\frac{4ak^2}{d} \left( \frac{3d-2}{d^3} + \frac{6k^3-14k^2+8k}{d^4} \cdot X_2 + \frac{k(k-1)(k-2)(k-3)}{d^4} \cdot X_4 + O(d^{-2-\epsilon/4}) \right)\\
    & = -O(d^{-1}) - O(1) \cdot X_2 - O(d) \cdot X_4 - O(\kappa).
\end{align}
Similarly, using the Cauchy-Schwarz inequality, we may bound the final term (18).
\begin{align}
    \frac{1}{nd} \left \langle \phi(\hat{\bm{z}}_{d+1}) - \bm{x}_{d+1}, O(\left \lvert \hat{\bm{z}}_{d+1} \right \rvert^3), \bm{x}_j - \hat{\bm{z}}_{d+1} \right \rangle & = \frac{4}{nd} O \left( \left \langle \left \lvert \hat{\bm{z}}_{d+1} \right \rvert \right \rangle^3 \right)\\
    & \leq \frac{4}{nd} \left \langle \hat{\bm{z}}_{d+1}^2 \right \rangle^{1/2} \left \langle \hat{\bm{z}}_{d+1}^4 \right \rangle^{1/2}\\
    & \leq O \left( d^{1+0.5\epsilon} \right)
\end{align}

Now, summing up the terms, we conclude that for relevant bits $j \in P$, its gradient $\partial L / \partial w_{d+1, j}$ has the dominating term $O(d^3) \cdot X_4 = O(d^3)$; for non-relevant bits $j' \notin P$, its gradient $\partial L / \partial w_{d+1, j'}$ has the dominating term $O(d^{1-\epsilon/4})$. Fix a learning rate $\eta = \Theta(d^{-3+\epsilon/8})$, then we obtain the following comparisons of the weights $\bf{W}^{(1)}$ after one gradient update. For $j \in P$ and $j' \notin P$, we have
\begin{equation}
    \frac{\sigma_{j'}(w^{(1)}_{d+1})}{\sigma_{j}(w^{(1)}_{d+1})} = e^{w^{(1)}_{d+1, j'} - w^{(1)}_{d+1, j}} \leq \exp \left( - \Omega(d^{\epsilon/8}) \right).
\end{equation}
Since attention scores sum up to one, we have $\sum_{j \in P} \sigma_j (\bm{w}^{(1)})$
If both $j, k \in P$, then the higher order terms cancel out and the perturbation terms for correct gradient updates become $O(d^{-\epsilon/4})$. Therefore, we have
\begin{equation}
    \frac{\sigma_{j}(w^{(1)}_{d+1})}{\sigma_{k}(w^{(1)}_{d+1})} = \frac{\sigma_{k}(w^{(1)}_{d+1})}{\sigma_{j}(w^{(1)}_{d+1})} \leq \exp (O(d^{-2 - \epsilon/8})) \leq 1 + O(d^{-2 - \epsilon/8}),
\end{equation}
where the last inequality holds because $e^t \leq 1 + O(t)$ for small $t > 0$. The ratio holds for all $k$ elements in $P$, so for any $j \in P$, we have
\begin{equation}
    \frac{1}{k} - O(d^{-2 - \epsilon/8}) \leq \sigma_{j} (\bm{w}^{(1)}) \leq \frac{1}{k} + O(d^{-2 - \epsilon/8}).
\end{equation}
Therefore, for any $d$-dimensional input $\bm{x}$, the prediction $\hat{y} = \hat{x}_{d+1}$ satisfies the following inequality:
\begin{align}
    | y - \hat{y} | & = | \phi(\hat{z}_{d+1}) - \phi(z_{d+1}) |\\
    & \leq k \times | \hat{z}_{d+1} - z_{d+1} |\\
    & = k \times \left \lvert O(d^{-2 - \epsilon/8}) + (d-k) \times \exp \left( - \Omega(d^{\epsilon/8}) \right) \right \rvert\\
    & = k \cdot O(d^{-2 - \epsilon/8}) = O(d^{-1-\epsilon/8}).
\end{align}

%\lang{After adding up, the signals for relevant bits are positive, but those for irrelevant bits are negative. This is already amplified by the choice of $\phi$ (derivative depending on $k$); a proper choice of learning rate will certainly amplify it further. Todo:
%\begin{itemize}
%    \item Find a specific learning rate so that each attention score $\sigma$ for a relevant bit satisfies $\lvert \sigma - 1/k \rvert = O(1/k^{2+\epsilon})$.
%\end{itemize}
%The theory part is done after this part. We can use softmin: with a positive learning rate (possibly a large one), the weights for relevant bits are very negative and others are very positive. Softmin extracts information from the smallest weights.
%}

\section{Proof of \Cref{main}}

The high-level ideas are identical with the previous proof. We still apply \Cref{lemma-vanishing-terms} for this proof to bound the perturbation terms. To reach the conclusion, we must compute the derivative of loss with respect to weights:
\begin{align}
    \frac{\partial L}{\partial w_{j,m}} (\mathbf{W}) & = \frac{1}{n(m-1)} \left \langle \phi(\hat{\bm{z}}_m - \bm{x}_m), \phi'(\hat{\bm{z}}_m), \bm{x}_j - \hat{\bm{z}}_m \right \rangle\\
    & = \frac{1}{n(m-1)} \left \langle 2 \hat{\bm{z}}_m - 1 - \bm{x}_m, \phi'(\hat{\bm{z}}_m), \bm{x}_j - \hat{\bm{z}}_m \right \rangle.
\end{align}
We can write $\phi'(\hat{z}_m) = 2d^3 \hat{z}_m$, so the derivative becomes
\begin{align}
    \frac{\partial L}{\partial w_{j,m}} (\mathbf{W}) & = \frac{1}{n(m-1)} \left \langle 2 \hat{\bm{z}}_m - 1 - \bm{x}_m, 2d^3 \hat{\bm{z}}_m, \bm{x}_j - \hat{\bm{z}}_m \right \rangle\\
    & = -\frac{1}{n(m-1)} \left \langle \bm{x}_m, 2d^3 \hat{\bm{z}}_m, \bm{x}_j - \hat{\bm{z}}_m \right \rangle\\
    & \quad \quad + \frac{1}{n(m-1)} \left \langle -\bm{1}_n + 2\hat{\bm{z}}_m, 2d^3 \hat{\bm{z}}_m, \bm{x}_j - \hat{\bm{z}}_m \right \rangle
\end{align}
The quadratic derivative only holds in $[-d^{-3}, d^{-3}]$, and we will bound it asymptotically, we may replace it with 2 here.

The structure of the proof is then divided into three parts.
\begin{itemize}[noitemsep, nolistsep, leftmargin=*]
    \item Sections B.1-B.3 are computations of the gradients and gradient differences that lead to conditions in \Cref{condition-one-height} and (\ref{condition-higher-height}).
    \item Section B.4 is the \textit{if} direction of \Cref{main}.
    \item Section B.5 is the \textit{only if} direction of the theorem.
\end{itemize}

\subsection{The first term}

We first rewrite the first term as
\begin{equation}
    -\frac{2}{n(m-1)} \left \langle \bm{x}_m, \hat{\bm{z}}_m, \bm{x}_j - \hat{\bm{z}}_m \right \rangle = - \frac{2}{n(m-1)^2} \sum_{\alpha} \langle \bm{x}_m, \bm{x}_{\alpha}, \bm{x}_j \rangle + \frac{2}{n(m-1)^2} \sum_{\alpha, \beta} \langle \bm{x}_m, \bm{x}_{\alpha}, \bm{x}_{\beta} \rangle.
\end{equation}
The value of the single-sum term depends on the position of $m$ and $j$. We compute the value of $\frac{\sum_{\alpha} \langle \bm{x}_m, \bm{x}_{\alpha}, \bm{x}_j \rangle}{n(m-1)}$ over all six possible cases.
\begin{enumerate}[noitemsep, nolistsep, leftmargin=*]
    \item $h[m]=1$. This condition restricts $d < m < d + k/2$. This large case can be divided into three following sub-cases.
    \begin{enumerate}[noitemsep, nolistsep, leftmargin=*]
        \item Node $j$ is a child of $m$, i.e. $p[j]=m$. In this case, if $\alpha = j'$ is another child of $m$, then $\langle x_m, x_{j'}, x_j \rangle = 1 - 2q_{m, j', j}$. For all other $\alpha \in P \setminus \{ j \}$, observe that the inner product $\langle x_m, x_{\alpha}, x_j \rangle$ is the product of an even number of relevant input variables, so it is a random variable with mean $1-\rho$. On the other hand, if $d < \alpha < m$, then $\langle x_m, x_{\alpha}, x_j \rangle$ is the product of an odd number of relevant input variables, so it is a random variable with mean zero. Therefore, we have
        \begin{equation} \label{Term-1-h[m]=1-correct}
            \frac{1}{n(m-1)} \sum_{\alpha} \langle \bm{x}_m, \bm{x}_{\alpha}, \bm{x}_j \rangle = \frac{1-2q_{m, c_1[m], c_2[m]}}{m-1} + \sum_{\alpha \in P \setminus \{ j \}} \frac{1-2q_{m,\alpha,j}}{m-1} (1-\rho) + O(\kappa).
        \end{equation}
        \item If $h[j]=0$ but $p[j] \neq m$. In this case, the inner product $\langle x_m, x_{\alpha}, x_j \rangle$ is never a determined value. Instead, $\langle x_m, x_{\alpha}, x_j \rangle$ is the product of an even number of relevant input variables whenever $\alpha \in P$, and a variable of mean zero otherwise. Therefore,
        \begin{equation} \label{Term-1-h[m]=1-incorrect-lower-height}
            \frac{1}{n(m-1)} \sum_{\alpha} \langle \bm{x}_m, \bm{x}_{\alpha}, \bm{x}_j \rangle = \sum_{\alpha \in P} \frac{1-2q_{m,\alpha,j}}{m-1} (1-\rho) + O(\kappa).
        \end{equation}
        \item Finally, suppose $h[j]=1$. Observe that if $j$ is still an input, then the three-term interaction is a random variable with mean zero; but if $d < \alpha < m$, it is a product of four relevant inputs. Therefore,
        \begin{equation} \label{Term-1-h[m]=1-incorrect-same-height}
            \frac{1}{n(m-1)} \sum_{\alpha} \langle \bm{x}_m, \bm{x}_{\alpha}, \bm{x}_j \rangle = \sum_{\alpha = d+1}^{m-1} \frac{1-2q_{m,\alpha,j}}{m-1} (1-\rho) + O(\kappa).
        \end{equation}
    \end{enumerate}
    \item $h[m]>1$. This condition restricts $d + k/2 < m \leq d+k-1$. Again, this case can be divided into three following sub-cases.
    \begin{enumerate}[noitemsep, nolistsep, leftmargin=*]
        \item Suppose $j \in P$. Then the inner product $\langle x_m, x_{\alpha}, x_j \rangle$ is always a product of two, four, or six inputs if $\alpha \in P$, or it is a random variable with mean zero otherwise. Therefore,
        \begin{equation} \label{Term-1-h[m]>1-incorrect-height-zero}
            \frac{1}{n(m-1)} \sum_{\alpha} \langle \bm{x}_m, \bm{x}_{\alpha}, \bm{x}_j \rangle = \sum_{\alpha \in P} \frac{1-2q_{m,\alpha,j}}{m-1} (1-\rho) + O(\kappa).
        \end{equation}
        \item If $p[j] = m$, then clearly $h[j] = h[m] - 1 \geq 1$. If $\alpha = j'$ is another child of $m$, then $\langle x_m, x_{j'}, x_j \rangle = 1 - 2q_{m, j', j}$. On the other hand, if $d < \alpha < m$ and $\alpha \neq j'$, then any inner product is the product of an even number of relevant input variables, so it is a random variable with mean $1-\rho$. In all other cases, the inner product has mean zero. Hence,
        \begin{equation} \label{Term-1-h[m]>1-correct}
            \frac{1}{n(m-1)} \sum_{\alpha} \langle \bm{x}_m, \bm{x}_{\alpha}, \bm{x}_j \rangle = \frac{1-2q_{m, c_1[m], c_2[m]}}{m-1} + \sum_{d < \alpha < m, \alpha \neq j'} \frac{1-2q_{m,\alpha,j}}{m-1} (1-\rho) + O(\kappa).
        \end{equation}
        \item Now suppose $h[j] > 0$ but $p[j] \neq m$. Then as long as $d < \alpha < m$, any inner product is the product of an even number of relevant input variables, so it is a random variable with mean $1-\rho$. In all other cases, the inner product has mean zero. Hence,
        \begin{equation} \label{Term-1-h[m]>1-incorrect-same-height}
            \frac{1}{n(m-1)} \sum_{\alpha} \langle \bm{x}_m, \bm{x}_{\alpha}, \bm{x}_j \rangle = \sum_{\alpha = d+1}^{m-1} \frac{1-2q_{m,\alpha,j}}{m-1} (1-\rho) + O(\kappa).
        \end{equation}
    \end{enumerate}
\end{enumerate}

\subsection{The second term}

We now evaluate the second term.
\begin{align}
    & \frac{1}{n(m-1)} \left \langle -\bm{1}_n + 2\hat{\bm{z}}_m, 2 \hat{\bm{z}}_m, \bm{x}_j - \hat{\bm{z}}_m \right \rangle\\
    & = \frac{1}{n(m-1)} \left \langle -\bm{1}_n, 2 \hat{\bm{z}}_m, \bm{x}_j - \hat{\bm{z}}_m \right \rangle + \frac{1}{n(m-1)} \left \langle 2\hat{\bm{z}}_m, 2d^3 \hat{\bm{z}}_m, \bm{x}_j - \hat{\bm{z}}_m \right \rangle\\
    & = -\frac{2}{n(m-1)} \langle \hat{\bm{z}}_m, x_j \rangle + \frac{2}{n(m-1)} \langle \hat{\bm{z}}_m^2 \rangle + \frac{4}{n(m-1)} \langle \hat{\bm{z}}_m^2, \bm{x}_j \rangle - \frac{4}{n(m-1)} \langle \hat{\bm{z}}_m^3 \rangle.
\end{align}

We focus on the first and third terms in the final expression. In particular, we compute the following:
\begin{equation*}
    \frac{1}{n} \langle \hat{\bm{z}}_m, \bm{x}_j \rangle = \frac{1}{n(m-1)} \sum_{\alpha} \langle \bm{x}_{\alpha}, \bm{x}_j \rangle \quad \& \quad \frac{1}{n} \langle \hat{\bm{z}}_m^2, \bm{x}_j \rangle = \frac{1}{n(m-1)^2} \sum_{\alpha, \beta} \langle \bm{x}_{\alpha}, \bm{x}_{\beta}, \bm{x}_j \rangle.
\end{equation*}

For the first two-order term, we divide into two cases.
\begin{enumerate}[noitemsep, nolistsep, leftmargin=*]
    \item $h[j]=0$. Clearly, $\langle x_j, x_j \rangle = 1$, and if $\alpha \in P \setminus \{ j \}$, the inner product is a product of two relevant bits, so it is a random variable with mean $1-\rho$. Otherwise, it is a variable with mean zero. Hence,
    \begin{equation} \label{Term-2.1-h[j]=0}
        \frac{1}{n} \langle \hat{\bm{z}}_m, \bm{x}_j \rangle = \frac{1}{m-1} + \frac{k-1}{m-1} (1-\rho) + O(\kappa).
    \end{equation}
    \item $h[j] \geq 1$. Be aware that $x_j$ itself is a product of $2^{h[j]}$ relevant input bits. Again, $\langle x_j, x_j \rangle = 1$. If $h[\alpha] = 0$, then the inner product is a product of an odd number of input bits, so has mean zero. For all other cases, i.e. $d < \alpha < m$ and $\alpha \neq j$, the product is the same as a product of an even number of relevant bits, and has mean $1-\rho$ without corruption. Therefore,
    \begin{equation} \label{Term-2.1-h[j]>0}
        \frac{1}{n} \langle \hat{\bm{z}}_m, \bm{x}_j \rangle = \frac{1}{m-1} + \sum_{d < \alpha < m, \alpha \neq j} \frac{1-2q_{\alpha, j}}{m-1} (1-\rho) + O(\kappa).
    \end{equation}
\end{enumerate}

For the next three-order term, we again divide into two cases on the height of $j$.
\begin{enumerate}[noitemsep, nolistsep, leftmargin=*]
    \item $h[j]=0$. Be aware that $x_j$ itself is a product of $2^{h[j]}$ relevant input bits. Therefore, to transform a product $\langle x_{\alpha}, x_{\beta}, x_j \rangle$ to an even multiplication of relevant bits, exactly one of $\alpha$ and $\beta$ must have height at least one, and the other must have height zero. The order can be different, so there are in total $2k(m-d-1)$ possible combinations. Hence, the total sum in this case is
    \begin{equation} \label{Term-2.3-h[j]=0}
        \frac{1}{n} \langle \hat{\bm{z}}_m^2, \bm{x}_j \rangle = \frac{2k(1-2q_{\alpha})}{(m-1)^2}(1-\rho) + O(\kappa).
    \end{equation}
    The summands only need to take care of the poisoning rate of the nodes with non-zero height because input bits are not corrupted.
    \item $h[j] \geq 1$. In this case, observe that $\langle x_{\alpha}, x_{\beta}, x_j \rangle$ has mean $1-\rho$ if and only if $h[\alpha] = h[\beta] = 0$ or $h[\alpha], h[\beta] \geq 1$. So there are in total $k^2 + (m-d-1)^2$ possibilities:
    \begin{equation} \label{Term-2.3-h[j]>0}
        \frac{1}{n} \langle \hat{\bm{z}}_m^2, \bm{x}_j \rangle = \sum_{\alpha, \beta = d+1}^{m-1} \frac{1-2q_{\alpha, \beta, j}}{(m-1)^2}(1-\rho) + \sum_{\alpha, \beta \in P} \frac{1-2q_j}{(m-1)^2} (1-\rho) + O(\kappa).
    \end{equation}
\end{enumerate}

\subsection{Differences of gradient updates}

Given $j \neq j' < m$, we compute the differences of gradient updates for $L$ with respect to $w_{j,m}$ and $w_{j',m}$ as the following:
\begin{align} \label{gradient-difference-general}
    \Delta_{m,j,j'} & = \frac{\partial L}{\partial w_{j,m}} (\mathbf{W}) - \frac{\partial L}{\partial w_{j',m}} (\mathbf{W})\\
    & = -\frac{1}{n(m-1)} \left \langle \bm{x}_m, 2 \hat{\bm{z}}_m, \bm{x}_j - \bm{x}_{j'} \right \rangle + \frac{1}{n(m-1)} \left \langle -\bm{1}_n + 2\hat{\bm{z}}_m, 2d^3 \hat{\bm{z}}_m, \bm{x}_j - \bm{x}_{j'} \right \rangle\\
    & = \left( -\frac{2}{n(m-1)} \left \langle \bm{x}_m, 2 \hat{\bm{z}}_m, \bm{x}_j \right \rangle \right) - \left( -\frac{2}{n(m-1)} \left \langle \bm{x}_m, 2 \hat{\bm{z}}_m, \bm{x}_{j'} \right \rangle \right)\\
    & \quad \quad + \left( \frac{2}{n(m-1)} \left \langle -\bm{1}_n + 2\hat{\bm{z}}_m, \hat{\bm{z}}_m, \bm{x}_j \right \rangle \right) - \left( \frac{2}{n(m-1)} \left \langle -\bm{1}_n + 2\hat{\bm{z}}_m, \hat{\bm{z}}_m, \bm{x}_{j'} \right \rangle \right)\\
    & = \left( - \frac{2}{n(m-1)^2} \sum_{\alpha} \langle \bm{x}_m, \bm{x}_{\alpha}, \bm{x}_j \rangle \right) - \left( - \frac{2}{n(m-1)^2} \sum_{\alpha} \langle \bm{x}_m, \bm{x}_{\alpha}, \bm{x}_{j'} \rangle \right)\\
    & \quad \quad + \left( -\frac{1}{n(m-1)^2} \sum_{\alpha} \langle \bm{x}_{\alpha}, \bm{x}_j \rangle \right) - \left( -\frac{1}{n(m-1)^2} \sum_{\alpha} \langle \bm{x}_{\alpha}, \bm{x}_{j'} \rangle \right)\\
    & \quad \quad + \left( \frac{1}{n(m-1)^3} \sum_{\alpha, \beta} \langle \bm{x}_{\alpha}, \bm{x}_{\beta}, \bm{x}_j \rangle \right) - \left( \frac{1}{n(m-1)^3} \sum_{\alpha, \beta} \langle \bm{x}_{\alpha}, \bm{x}_{\beta}, \bm{x}_{j'} \rangle \right)\\
    & \quad \quad + O(d^{-2-\epsilon/4}).
\end{align}

Observe that the first two terms depend on locations of all $\{ m,j,j' \}$, while last four terms above do not depend on $m$ but only depend on the locations of $j$ and $j'$. So we compute them separately. In particular, for each choice of $m$, we must first compute the ``\textbf{correct}'' gradient $\partial L / \partial w_{c_1[m], m} = \partial L / \partial w_{c_2[m], m}$, and then compute the ``\textbf{incorrect}'' gradients depending on the location of $j$. Concretely, the steps are the following.
\begin{enumerate}[noitemsep, nolistsep, leftmargin=*]
    \item Assume $h[m]=1$, compute $\partial L / \partial w_{c_1[m], m} = \partial L / \partial w_{c_2[m], m}$.
    \begin{enumerate}[noitemsep, nolistsep, leftmargin=*]
        \item Compute the gradient $\partial L / \partial w_{j,m}$ if $h[j]=0$ but $p[j] \neq m$.
        \item Compute the gradient $\partial L / \partial w_{j',m}$ if $h[j']>0$.
        \item Subtract the correct gradient with the previous two incorrect gradients.
    \end{enumerate}
    \item Assume $h[m]>1$, compute $\partial L / \partial w_{c_1[m], m} = \partial L / \partial w_{c_2[m], m}$.
    \begin{enumerate}[noitemsep, nolistsep, leftmargin=*]
        \item Compute the gradient $\partial L / \partial w_{j,m}$ if $h[j]=0$.
        \item Compute the gradient $\partial L / \partial w_{j',m}$ if $h[j']>0$ but $p[j] \neq m$.
        \item Subtract the correct gradient with the previous two incorrect gradients.
    \end{enumerate}
\end{enumerate}

For Step 1, the equation is
\begin{equation}
    \left( -\frac{2}{m-1} \right) \times \cref{Term-1-h[m]=1-correct} + \left( -\frac{2}{m-1} \right) \times \cref{Term-2.1-h[j]=0} + \left( \frac{4}{m-1} \right) \times \cref{Term-2.3-h[j]=0}.
\end{equation}
For Step 1.(a), the equation is
\begin{equation}
    \left( -\frac{2}{m-1} \right) \times \cref{Term-1-h[m]=1-incorrect-lower-height} + \left( -\frac{2}{m-1} \right) \times \cref{Term-2.1-h[j]=0} + \left( \frac{4}{m-1} \right) \times \cref{Term-2.3-h[j]=0}.
\end{equation}
For Step 1.(b), the equation is
\begin{equation}
    \left( -\frac{2}{m-1} \right) \times \cref{Term-1-h[m]=1-incorrect-same-height} + \left( -\frac{2}{m-1} \right) \times \cref{Term-2.1-h[j]>0} + \left( \frac{4}{m-1} \right) \times \cref{Term-2.3-h[j]>0}.
\end{equation}
For Step 1.(c), the differences are
\begin{equation}
    \left( -\frac{2}{m-1} \right) \times \cref{Term-1-h[m]=1-correct} - \left( -\frac{2}{m-1} \right) \times \cref{Term-1-h[m]=1-incorrect-lower-height} = - \frac{2 \rho (1-2q_m)}{(m-1)^2};
\end{equation}
and
\begin{align}
    & \left( -\frac{2}{m-1} \right) \times \cref{Term-1-h[m]=1-correct} - \left( -\frac{2}{m-1} \right) \times \cref{Term-1-h[m]=1-incorrect-same-height}\\
    & \quad + \left( -\frac{2}{m-1} \right) \times \cref{Term-2.1-h[j]=0} - \left( -\frac{2}{m-1} \right) \times \cref{Term-2.1-h[j]>0}\\
    & \quad + \left( \frac{4}{m-1} \right) \times \cref{Term-2.3-h[j]=0} - \left( \frac{4}{m-1} \right) \times \cref{Term-2.3-h[j]>0}.
\end{align}

For Step 2, the equation is
\begin{equation}
    \left( -\frac{2}{m-1} \right) \times \cref{Term-1-h[m]>1-correct} + \left( -\frac{2}{m-1} \right) \times \cref{Term-2.1-h[j]>0} + \left( \frac{4}{m-1} \right) \times \cref{Term-2.3-h[j]>0}.
\end{equation}
For Step 2.(a), the equation is
\begin{equation}
    \left( -\frac{2}{m-1} \right) \times \cref{Term-1-h[m]>1-incorrect-height-zero} + \left( -\frac{2}{m-1} \right) \times \cref{Term-2.1-h[j]=0} + \left( \frac{4}{m-1} \right) \times \cref{Term-2.3-h[j]=0}.
\end{equation}
For Step 2.(b), the equation is
\begin{equation}
    \left( -\frac{2}{m-1} \right) \times \cref{Term-1-h[m]>1-incorrect-same-height} + \left( -\frac{2}{m-1} \right) \times \cref{Term-2.1-h[j]>0} + \left( \frac{4}{m-1} \right) \times \cref{Term-2.3-h[j]>0}.
\end{equation}
For Step 2.(c), the differences are
\begin{align}
    & \left( -\frac{2}{m-1} \right) \times \cref{Term-1-h[m]>1-correct} - \left( -\frac{2}{m-1} \right) \times \cref{Term-1-h[m]>1-incorrect-height-zero}\\
    & \quad + \left( -\frac{2}{m-1} \right) \times \cref{Term-2.1-h[j]>0} - \left( -\frac{2}{m-1} \right) \times \cref{Term-2.1-h[j]=0}\\
    & \quad + \left( \frac{4}{m-1} \right) \times \cref{Term-2.3-h[j]>0} - \left( \frac{4}{m-1} \right) \times \cref{Term-2.3-h[j]=0};
\end{align}
and
\begin{equation}
    \left( -\frac{2}{m-1} \right) \times \cref{Term-1-h[m]>1-correct} - \left( -\frac{2}{m-1} \right) \times \cref{Term-1-h[m]>1-incorrect-same-height} = -\frac{2\rho (1-2q_{m, c_1[m], c_2[m]})}{(m-1)^2}.
\end{equation}

We have computed one gradient difference for each case of $m$, and there is one remaining for each $m$. Observe that, we only need to compute three following expressions:
\begin{align}
    & G_{h[m]=1} (m, j, \rho) = -\frac{2}{m-1} \times \left( \cref{Term-1-h[m]=1-correct} - \cref{Term-1-h[m]=1-incorrect-same-height} \right);\\
    & G_{h[m]>1} (m, j, \rho) = -\frac{2}{m-1} \times \left( \cref{Term-1-h[m]>1-correct} - \cref{Term-1-h[m]>1-incorrect-height-zero} \right);
\end{align}
and
\begin{equation} \label{S(m-j)}
    \left( -\frac{2}{m-1} \right) \times \cref{Term-2.1-h[j]=0} - \left( -\frac{2}{m-1} \right) \times \cref{Term-2.1-h[j]>0} + \left( \frac{4}{m-1} \right) \times \cref{Term-2.3-h[j]=0} - \left( \frac{4}{m-1} \right) \times \cref{Term-2.3-h[j]>0}.
\end{equation}
Observe that the remaining gradients can be equivalently expressed as
\begin{equation}
    G_{h[m]=1} (m, j, \rho) + \cref{S(m-j)} \quad \& \quad G_{h[m]>1} (m, j, \rho) - \cref{S(m-j)}.
\end{equation}

Using the ingredients from earlier results, for $m \in \{ d+1, \ldots, d + k/2 \}$ and $d<j<m$, we have
\begin{small}
\begin{align}
\begin{split}
    G_{h[m]=1} (m, j, \rho) & = \frac{-2(1-2q_m)}{(m-1)^2} + \frac{-2(k-1)(1-2q_m)}{m-1} (1-\rho)\\
    & \quad \quad - \sum_{\alpha = d+1}^{m-1} \frac{-2(1-2q_{m,\alpha,j})}{m-1} (1-\rho) + O(d^{-2-\epsilon/4}).
\end{split}
\end{align}
\end{small}
Similarly, for $m > d+k/2$ and $d<j<m$, we have
\begin{small}
\begin{align}
\begin{split}
    G_{h[m]>1} (m, j, \rho) & = \frac{-2(1-2q_{m, c_1[m], c_2[m]})}{(m-1)^2} + \sum_{d < \alpha < m, \alpha \neq j'} \frac{-2(1-2q_{m,\alpha,j})}{(m-1)^2} (1-\rho)\\
    & \quad \quad - \frac{-2k(1-2q_{m,\alpha,j})}{(m-1)^2} (1-\rho) + O(d^{-2-\epsilon/4}).
\end{split}
\end{align}
\end{small}

Observe that \Cref{Term-2.1-h[j]=0}-(\ref{Term-2.3-h[j]>0}) can all be factored out by $1-\rho$, so we may have
\begin{equation}
    \text{\Cref{S(m-j)}} = (1-\rho) \cdot S(m,j)
\end{equation}
for an expression $S(m,j)$. Using our results of \Cref{Term-2.1-h[j]=0}-(\ref{Term-2.3-h[j]>0}) earlier, we have
\begin{small}
\begin{equation}
    S(m, j) = - \frac{2(k-1)}{(m-1)^2} + \sum_{d < \alpha < m, \alpha \neq j} \frac{2(1-2q_{\alpha})}{(m-1)^2} + \sum_{\alpha = d+1}^{m-1} \frac{8k(1-2q_{\alpha})}{(m-1)^3} -  \sum_{\alpha, \beta = d+1}^{m-1} \frac{4(1-2q_{\alpha, \beta, j})}{(m-1)^3} - \sum_{\alpha, \beta \in P} \frac{4(1-2q_m)}{(m-1)^3}.
\end{equation}
\end{small}

Finally, for each case of $m$, we conclude the differences between correct and incorrect gradients:
\begin{itemize}[noitemsep, nolistsep, leftmargin=*]
    \item $h[m]=1$. Then the differences are
    \begin{equation}
        \left \{ -\frac{2\rho (1-2q_m)}{(m-1)^2} \right \} \quad \& \quad \left \{ G_{h[m]=1} (m, j, \rho) + (1-\rho) S(m, j), \quad d < j < m \right \}.
    \end{equation}
    \item $h[m]>1$. Then the differences are
    \begin{equation}
        \left\{ -\frac{2\rho (1-2q_{m, c_1[m], c_2[m]})}{(m-1)^2} \right \} \quad \& \quad \left \{ G_{h[m] > 1} (m, j, \rho) - (1-\rho) S(m, j), \quad d<j<m \right \}.
    \end{equation}
\end{itemize}

\subsection{Conditions are sufficient}

For any choice of $m$, if the condition in either \Cref{condition-one-height} or (\ref{condition-higher-height}) (depending on the height of $m$) holds, then the condition is equivalent with the fact: There exists a value $\mu > -2 - \epsilon/4$ such that for any $j < m$, the differences between correct and incorrect gradients satisfy the following
\begin{equation}
    \Delta_{m, c_1[m], j}, \Delta_{m, c_2[m], j} < -O(d^{\mu}).
\end{equation}
This means the gap between the correct and incorrect gradients is large. If we pick any $\mu' \in (-\mu, 2+\epsilon/4)$ and choose a learning rate $\eta = \Theta (d^{\mu'})$, then after one gradient update, the difference between weights for children of $m$ and weights for non-children is:
\begin{equation}
    \left \lvert \Delta^{(1)}_{m, c_1[m], j} \right \rvert, \left \lvert \Delta^{(1)}_{m, c_2[m], j} \right \rvert \geq O(d^{-\mu + \mu'}) + O(d^{-2 - \epsilon/4 + \mu'}) = O(d^{-\mu + \mu'}).
\end{equation}
The last equality holds because, by the range of $\mu'$, we must have $-\mu + \mu' > 0$ and $-2 - \epsilon/4 + \mu' < 0$; so the second quantity is dominated by the first one. 

The conditions imply that the incorrect weights are smaller than correct weights, so applying the softmax attention score function, for $j \neq c_1[m], c_2[m]$ we have
\begin{equation}
    \sigma_{j} (\bm{w}^{(1)}_m)  \leq \exp \left( - \left \lvert \Delta^{(1)}_{m, c_1[m], j} \right \rvert \right) \leq \exp \left( - \Theta \left( d^{-\mu + \mu'} \right) \right).
\end{equation}
Softmax scores must sum to 1, we must have
\begin{equation}
    \sigma_{c_1[m]} (\bm{w}^{(1)}_m) + \sigma_{c_2[m]} (\bm{w}^{(1)}_m) \geq 1 - \exp \left( - \Theta \left( d^{-\mu + \mu'} \right) \right).
\end{equation}
Moreover, we observe that in this case, the correct attention scores $\sigma_{c_1[m]} (\bm{w}^{(1)}_m)$ and $\sigma_{c_2[m]} (\bm{w}^{(1)}_m)$ are close enough:
\begin{equation}
    \frac{\sigma_{c_1[m]} (\bm{w}^{(1)}_m)}{\sigma_{c_2[m]} (\bm{w}^{(1)}_m)} = \exp \left( w^{(1)}_{c_1[m],m} - w^{(1)}_{c_2[m],m} \right) \leq \exp \left( O \left( d^{-2 - \epsilon/4 + \mu'} \right) \right) \leq 1 + O \left( d^{-2 - \epsilon/4 + \mu'} \right),
\end{equation}
where the last inequality holds because $e^t \leq 1 + O(t)$ for small $t > 0$. By symmetry, we have the same upper bound for $\sigma_{c_1[m]} (\bm{w}^{(1)}_m) / \sigma_{c_2[m]} (\bm{w}^{(1)}_m)$. As a result, we have
\begin{equation}
    \frac{1}{2} - O \left( d^{-2 - \epsilon/4 + \mu'} \right) \leq \sigma_{c_1[m]} (\bm{w}^{(1)}_m), \sigma_{c_2[m]} (\bm{w}^{(1)}_m) \leq \frac{1}{2} + O \left( d^{-2 - \epsilon/4 + \mu'} \right).
\end{equation}
The equation above implies that for each step $d < m \leq d+k-1$, the attention layer at step $m$ almost computes the average of two children nodes, and all information from other non-children nodes are dominated and essentially vanish as $d$ becomes large.

We now show that using the attention scores, every prediction step, including the final output, has a vanishing loss. Let $\bm{x}$ be a $d$-dimensional binary input vector. For every $d < m \leq d+k-1$, let $\hat{z}^{(1)}_m$ be the empirical output of the attention layer, then $\phi(\hat{z}^{(1)}_m)$ is the empirical prediction, and the prediction loss for step $m$ is
\begin{align}
    \epsilon_m & = \left \lvert \phi \left( \hat{z}^{(1)}_m \right) - \phi \left( \frac{x_{c_1[m]} + x_{c_2[m]}}{2} \right) \right \rvert\\
    & \leq 2 \times \left \lvert \hat{z}^{(1)}_m - \frac{x_{c_1[m]} + x_{c_2[m]}}{2} \right \rvert\\
    & \leq 2 \times \left \lvert \hat{z}^{(1)}_m - \frac{\hat{x}_{c_1[m]} + \hat{x}_{c_2[m]}}{2} \right \rvert + 2 \times \left \lvert \frac{x_{c_1[m]} + x_{c_2[m]}}{2} - \frac{\hat{x}_{c_1[m]} + \hat{x}_{c_2[m]}}{2} \right \rvert \\
    & = 2d \exp \left( - \Theta \left( d^{-\mu + \mu'} \right) \right) + 2 \times \left \lvert \sigma_{c_1[m]} (\bm{w}_m^{(1)}) - \frac{1}{2} \right \rvert + 2 \times \left \lvert \sigma_{c_2[m]} (\bm{w}_m^{(1)}) - \frac{1}{2} \right \rvert + 2 \epsilon_{m-1}\\
    & = O(d^{-2 - \epsilon/4 + \mu'}) + O(\epsilon_{m-1}).
\end{align}
If $m = d+1$, then $\epsilon_{m-1} = \epsilon_{d} = 0$ because the $d$-th value is still an input. Therefore, this upper bound holds for every $d < m \leq d+k-1$ and take $m = d+k-1$ so that $x_m = y$, we conclude that $| \hat{y} - y | = | \hat{x}_{d+k-1} - x_{d+k-1} | = O(d^{-2 - \epsilon/4 + \mu'})$.

\subsection{Conditions are necessary}

Now suppose there is at least one $m$ such that the condition in \Cref{condition-one-height} or (\ref{condition-higher-height}) (depending on the height of $m$) does not hold. This implies that, for this $m$, there exists at least one non-child node $j$ such that $j < m$ and the gap between this incorrect gradient $\partial L / \partial w_{m, j}$ and the correct gradients $\partial L / \partial w_{m, c_1[m]}$, $\partial L / \partial w_{m, c_2[m]}$ is too small to be distinguished. Precisely, there exists a number $\delta \leq -2 - \epsilon/4$ such that
\begin{equation}
    \left \lvert \Delta^{(1)}_{m, c_1[m], j} \right \rvert, \left \lvert \Delta^{(1)}_{m, c_2[m], j} \right \rvert \leq O(d^{\delta}) + O(d^{-2 - \epsilon/4}) = O(d^{-2 - \epsilon/4}) = \left \lvert \Delta^{(1)}_{m, c_1[m], c_2[m]} \right \rvert.
\end{equation}
Using the same analysis in the proof of sufficiency, the small gaps implies that the attention scores $\sigma_{c_1[m]} (\bm{w}^{(1)}_m), \sigma_{c_2[m]} (\bm{w}^{(1)}_m), \sigma_{j} (\bm{w}^{(1)}_m)$ are close. Consider the optimal scenario under this case, that the condition in \Cref{condition-one-height} or (\ref{condition-higher-height}) (depending on the height of $m$) holds for any other $j' \neq j$, then we have $\sigma_{c_1[m]} (\bm{w}^{(1)}_m) +  \sigma_{c_2[m]} (\bm{w}^{(1)}_m) +  \sigma_{j} (\bm{w}^{(1)}_m) = 1 - e^{- \Theta(d^{\nu})}$ for some $\nu > 0$ and for any $a, b \in \{ c_1[m], c_2[m], j \}$, we have
\begin{equation}
    \frac{\sigma_a (\bm{w}^{(1)}_m)}{\sigma_b (\bm{w}^{(1)}_m)} \leq \exp \left( O \left( d^{-2 - \epsilon/4} \right) \right) \leq 1 + O \left( d^{-2 - \epsilon/4} \right).
\end{equation}
Equivalently,
\begin{equation}
    \frac{1}{3} - O \left( d^{-2 - \epsilon/4} \right) \leq \sigma_{c_1[m]} (\bm{w}^{(1)}_m), \sigma_{c_2[m]} (\bm{w}^{(1)}_m), \sigma_{j} (\bm{w}^{(1)}_m) \leq \frac{1}{3} + O \left( d^{-2 - \epsilon/4} \right)
\end{equation}

For a sufficiently large $d$, i.e. as $d \to \infty$, we may regard the predictor at step $m$ is exactly the function $\hat{x}_m = \phi \left( \frac{1}{3} x_{c_1[m]} + \frac{1}{3} x_{c_2[m]} + \frac{1}{3} x_j \right)$, where the ground truth must still be $x_m = \phi \left( \frac{1}{2} x_{c_1[m]} + \frac{1}{2} x_{c_2[m]} \right)$. If the $d$-dimensional inputs are uniformly generated, then with probability exactly 0.5, the sample satisfies the property that $x_{c_1[m]} = -x_{c_2[m]}$, so the true prediction for step $m$ is $x_m = \phi(0) = -1$. On the other hand, the empirical prediction is $\hat{x}_m = \phi(x_j/3)$ and therefore fixed as $\phi(1/3) = \phi(-1/3) = -1/3$. Therefore, the the error for prediction at stepm $m$ is lower bounded as the following:
\begin{equation}
    \mathbb{E}_{\bm{x} \sim \text{Uniform}(\{ \pm 1 \}^d)} \left \lvert \hat{x}_m - x_m \right \rvert \geq \frac{1}{2} \times \left \lvert -1 + \frac{1}{3} \right \rvert = \frac{1}{3} = \Omega (1).
\end{equation}

We conclude this proof by showing that, if one step $m$ for some $d < m \leq d+k-1$ has a non-negligible loss, then the loss for the final prediction also has a non-negligible loss, i.e. $| y - \hat{y} | = | x_{d+k-1} - \hat{x}_{d+k-1} | = \Omega(1)$.

We first show that such an error of node $m$ causes a non-negligible damage on its parent node $p[m]$, i.e. $| x_{p[m]} - \hat{x}_{p[m]} | = \Omega(1)$. Denote $m'$ as the unique sibling of $m$, i.e. $p[m] = p[m']$. Then clearly $x_{p[m]} = x_m x_{m'}$. Hence, the following inequality satisfies; we abuse the notation by using $\mathbb{E}$ as the uniform generation.
\begin{align}
    \mathbb{E}_{\bm{x} \sim \text{Uniform}(\{ \pm 1 \}^d)} | x_{p[m]} - \hat{x}_{p[m]} | & = \mathbb{E} | x_m x_{m'} - \hat{x}_m \hat{x}_{m'} |\\
    & \stackrel{d \to \infty}{=} \mathbb{E} | x_m x_{m'} - x_m \hat{x}_{m'} |\\
    & = \mathbb{E} \left [ |x_m| \cdot |x_{m'} - \hat{x}_{m'}| \right]\\
    & = \mathbb{E} |x_{m'} - \hat{x}_{m'}| = \Omega(1).
\end{align}
The inequality holds for every $m$ regardless of its location, and inductively, this non-negligible error propagates to higher ancestors of $m$, and ultimately to the root of the tree.

Recall that this is the ``best'' scenario when the condition in \Cref{condition-one-height} or (\ref{condition-higher-height}) fails for $m$, i.e. only one non-child node $j$ has a prohibitively high attention score. If more non-children nodes fail, say $f$ of them in the set $F \subseteq [m-1]$, then $\hat{z}_m = \sum_{f \in F} x_f /(f-1)$. By Hoeffding's inequality, we have
\begin{equation}
    \mathbb{P} \left( \phi \left( \frac{\sum_{f \in F} x_f}{f-1} \right) < 0 \right) = \mathbb{P} \left( \frac{\sum_{f \in F} x_f}{f-1} < \frac{1}{2} \right) \geq 1 - e^{-\Omega(f)}.
\end{equation}
Nevertheless, the true prediction is still uniform, so
\begin{equation}
    \mathbb{E} | x_m - \hat{x}_m | \geq \frac{1}{2} \left( 1 - e^{-\Omega(f)} \right) = \frac{1}{2} - o(1) = \Omega(1).
\end{equation}
Using the same argument for the root prediction for the case above, the final prediction also suffers an error of $\Omega(1)$, as desired.

\section{Experiment details}

The feedforward layer function for the transformer is the same function in Section 3.2, i.e.
\begin{equation*}
    \phi(x) = \begin{cases}
    d^3 x^2 + d^{-3} -1, \quad x \in (-d^{-3}, d^{-3});\\
    2|x| - 1, \quad \quad \quad \quad \text{otherwise}.
    \end{cases}
\end{equation*}

The testing data are uniformly generated. Although for general tasks, it is natural to expect that the test error should be large if the training and testing distributions are different. However, in this $k$-parity test, if the training is successful, the predictor is expected to identify the positions of relevant bits, regardless the training distribution where it was learned. Therefore, testing the predictor for all values of $\rho \in \{ 0, 0.25, 0.5, 0.75, 1 \}$ is the only fair measure of the predictor's performance. 

The experiments were ran for five times with $d = 128$ and $k = 64$. The mean and variance values for each case is illustrated in Figure 6, which has the same format as \Cref{fig:heatmap}. For each grid, the mean and variance are computed by the following  standard formulas:
\begin{equation*}
    \text{Mean}(\mu) = \frac{\sum_{i=1}^n x_i}{n} \quad \& \quad \text{Variance} (\sigma^2) = \frac{\sum_{i=1}^n (x_i - \mu)^2}{n}.
\end{equation*}

\clearpage
\vfill
\begin{figure} \label{heat-map-mean-var}
    \centering
    \includegraphics[width = 1.0\textwidth]{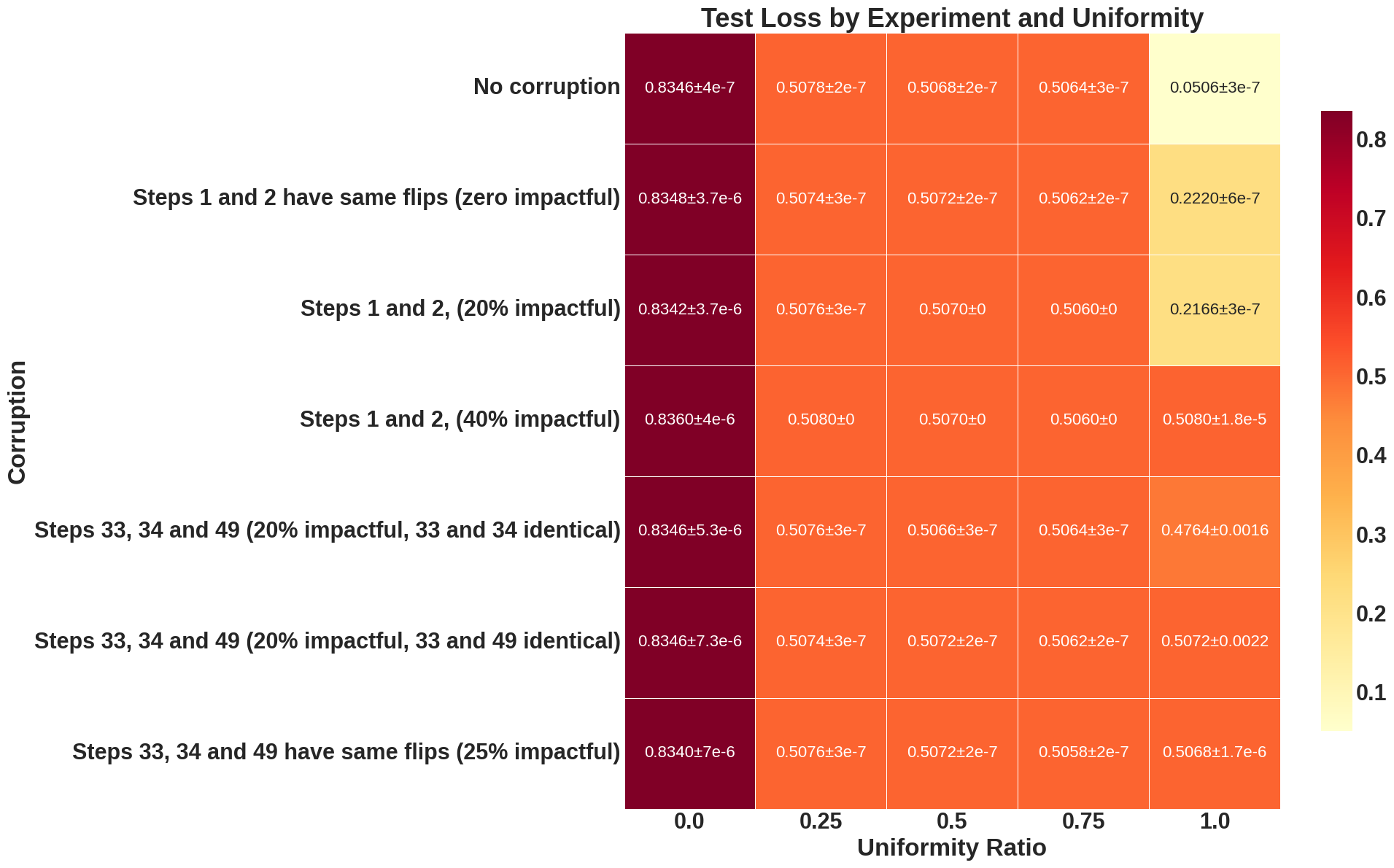}
    \caption{Mean and variance of the test losses after repetitions}
    \label{fig:enter-label}
\end{figure}
\vfill
\clearpage

\end{document}